\documentclass{article}
\usepackage{microtype}
\usepackage{graphicx}
\usepackage{dsfont}
\usepackage{caption,subcaption}
\usepackage{amsmath}
\usepackage{amssymb}
\usepackage{mathtools}
\usepackage{amsthm}
\usepackage{algorithm}
\usepackage{algorithmic}
\usepackage{enumitem}
\usepackage[utf8]{inputenc} 
\usepackage[T1]{fontenc}    
\usepackage{hyperref}       
\usepackage{url}            
\usepackage{booktabs}       
\usepackage{amsfonts}       
\usepackage{nicefrac}       
\usepackage{microtype}      
\usepackage{xcolor}         

\newcommand\ind[1]{\ensuremath{\mathds{1}\left[#1\right]}}

\newcommand{\Ell}{\mathcal{L}}
\newcommand{\A}{\mathcal{A}}
\newcommand{\T}{\mathcal{T}}

\newcommand{\E}[2]{\mathbf{E}_{#1}\left[#2\right]}

\newcommand{\algref}[1]{Algorithm~\ref{#1}}

\newcommand{\thmref}[1]{Theorem~\ref{#1}}
\newcommand{\lemref}[1]{Lemma~\ref{#1}}

\newtheorem{theorem}{Theorem}[section]

\newtheorem{corollary}[theorem]{Corollary}
\newtheorem{definition}[theorem]{Definition}
\newtheorem{assumption}[theorem]{Assumption}
\newtheorem{remark}[theorem]{Remark}
\newtheorem{example}{Example}
\newtheorem{observation}{Observation}



\usepackage{thmtools,thm-restate}

\setlist[itemize]{leftmargin=*}

\usepackage{iclr2023_conference,times}


\usepackage{hyperref}
\usepackage{url}

\title{Adversarial Attacks on Adversarial Bandits}


\author{Yuzhe Ma  \\
Microsoft Azure AI \\
\texttt{yuzhema@microsoft.com} \\
\And
Zhijin Zhou \thanks{This work does not relate to the author's position at Amazon, no matter how the author affiliates.}\\
Amazon \\
\texttt{zhijin@amazon.com} \\
}

%

\iclrfinalcopy 
\begin{document}

\maketitle

\begin{abstract}
 We study a security threat to adversarial multi-armed bandits, in which an attacker perturbs the loss or reward signal to control the behavior of the victim bandit player. We show that the attacker is able to mislead any no-regret adversarial bandit algorithm into selecting a suboptimal target arm in every but sublinear ($T-o(T)$) number of rounds, while incurring only sublinear ($o(T)$) cumulative attack cost. This result implies critical security concern in real-world bandit-based systems, e.g., in online recommendation, an attacker might be able to hijack the recommender system and promote a desired product. Our proposed attack algorithms require knowledge of only the regret rate, thus are agnostic to the concrete bandit algorithm employed by the victim player. We also derived a theoretical lower bound on the cumulative attack cost that any victim-agnostic attack algorithm must incur. The lower bound matches the upper bound achieved by our attack, which shows that our attack is asymptotically optimal.
   \end{abstract}

\section{Introduction}
Multi-armed bandit presents a sequential learning framework that enjoys applications in a wide range of real-world domains, including medical treatment~\cite{zhou2019tumor,kuleshov2014algorithms}, online advertisement~\cite{li2010contextual}, resource allocation~\cite{feki2011autonomous,whittle1980multi}, search engines~\cite{radlinski2008learning}, etc.  In bandit-based applications, the learning agent (bandit player) often receives reward or loss signals generated through real-time interactions with users. For example, in search engine, the user reward can be clicks, dwelling time, or direct feedbacks on the displayed website. The user-generated loss or reward signals will then be collected by the learner to update the bandit policy. One security caveat in user-generated rewards is that there can be malicious users who generate adversarial reward signals. For instance, in online recommendation, adversarial customers can write fake product reviews to mislead the system into making wrong recommendations. In search engine, cyber-attackers can create click fraud through malware and causes the search engine to display undesired websites. In such cases, the malicious users influence the behavior of the underlying bandit algorithm by generating adversarial reward data. Motivated by that, there has been a surge of interest in understanding potential security issues in multi-armed bandits, i.e., to what extend are multi-armed bandit algorithms susceptible to adversarial user data.

Prior works mostly focused on studying reward attacks in the stochastic multi-armed bandit setting~\cite{jun2018adversarial,liu2019data}, where the rewards are sampled according to some distribution. In contrast, less is known about the vulnerability of adversarial bandits, a more general bandit framework that relaxes the statistical assumption on the rewards and allows arbitrary (but bounded) reward signals. The adversarial bandit also has seen applications in a broad class of real-world problems especially when the reward structure is too complex to model with a distribution, such as inventory control~\cite{even2009online} and shortest path routing~\cite{neu2012adversarial}. Similarly, the same security problem could arise in adversarial bandits due to malicious users. Therefore, it is imperative to investigate potential security caveats in adversarial bandits, which provides insights to help design more robust adversarial bandit algorithms and applications.
 
In this paper, we take a step towards studying reward attacks on adversarial multi-armed bandit algorithms. We assume the attacker has the ability to perturb the reward signal, with the goal of misleading the bandit algorithm to always select a target (sub-optimal) arm desired by the attacker. Our main contributions are summarized as below. (1) We present attack algorithms that can successfully force arbitrary no-regret adversarial bandit algorithms into selecting any target arm in $T-o(T)$ rounds while incurring only $o(T)$ cumulative attack cost, where $T$ is the total rounds of bandit play. (2) We show that our attack algorithm is theoretically optimal among all possible \textit{victim-agnostic} attack algorithms, which means that no other attack algorithms can successfully force $T-o(T)$ target arm selections with a smaller cumulative attack cost than our attacks while being agnostic to the underlying victim bandit algorithm. (3) We empirically show that our proposed attack algorithms are efficient on both vanilla and a robust version of Exp3 algorithm~\cite{yang2020adversarial}.

\section{Preliminaries}
The bandit player has a finite action space $\A=\{1, 2,...,K\}$, where $K$ is the total number of arms. There is a fixed time horizon $T$. In each time step $t\in [T]$, the player chooses an arm $a_t\in \A$, and then receives loss $\ell_t=\Ell_t(a_t)$ from the environment, where $\Ell_t$ is the loss function at time $t$. In this paper, we consider ``loss'' instead of reward, which is more standard in adversarial bandits. However, all of our results would also apply in the reward setting. Without loss of generality, we assume the loss functions are bounded: $\Ell_t(a)\in [0,1], \forall a, t$. Moreover, we consider the so-called non-adaptive environment~\cite{slivkins2019introduction,bubeck2012regret}, which means the loss functions $\Ell_{1:T}$ are fixed beforehand and cannot change adaptively based on the player behavior after the bandit play starts. The goal of the bandit player is to minimize the difference between the cumulative loss incurred by always selecting the optimal arm in hindsight and the cumulative loss incurred by the bandit algorithm, which is defined as the regret below.
\begin{definition}(Regret). The regret of the bandit player is
\begin{equation}\label{eq:regret_def}
R_T=\E{}{\sum_{t=1}^T\Ell_t(a_t)}-\min_a \sum_{t=1}^T \Ell_t(a),
\end{equation}
where the expectation is with respect to the randomness in the selected arms $a_{1:T}$.
\end{definition}
We now make the following major assumption on the bandit algorithm throughout the paper.
\begin{assumption}(No-regret Bandit Algorithm).
\label{assump:no-regret}
We assume the adversarial bandit algorithm satisfies the ``no-regret'' property asymptotically, i.e., $R_T=O(T^\alpha)$ for some $\alpha\in [\frac{1}{2},1)$\footnote{We assume $\alpha\ge \frac{1}{2}$ because prior works~\cite{auer1995gambling,gerchinovitz2016refined} have proved that the regret has lower bound $\Omega(\sqrt{T})$.}. 
\end{assumption}
As an example, the classic adversarial bandit algorithm Exp3 achieves $\alpha=\frac{1}{2}$. In later sections, we will propose attack algorithms that apply not only to Exp3, but also arbitrary no-regret bandit algorithms with regret rate $\alpha\in [\frac{1}{2},1)$.
Note that the original loss functions $\Ell_{1:T}$ in~\eqref{eq:regret_def} could as well be designed by an adversary, which we refer to as the ``environmental adversary''.  In typical regret analysis of adversarial bandits, it is implicitly assumed that the environmental adversary aims at inducing large regret on the player. To counter the environmental adversary, algorithms like Exp3 introduce randomness into the arm selection policy, which provably guarantees sublinear regret for arbitrary sequence of adversarial loss functions $\Ell_{1:T}$. 

\subsection{Motivation of Attacks on Adversarial Bandits}
In many bandit-based applications, an adversary may have an incentive to pursue different attack goals than boosting the regret of the bandit player. For example, in online recommendation, imagine the situation that there are two products, and both products can produce the maximum click-through rate. We anticipate a fair recommender system to treat these two products equally and display them with equal probability.  However, the seller of the first product might want to mislead the recommender system to break the tie and recommend his product as often as possible, which will benefit him most. Note that even if the recommender system chooses to display the first product every time, the click-through rate (i.e., reward) of the system will not be compromised because the first product has the maximum click-through rate by assumption, thus there is no regret in always recommending it. In this case, misleading the bandit player to always select a target arm does not boost the regret. We point out that in stochastic bandits, forcing the bandit player to always select a sub-optimal target arm must induce linear regret. Therefore, a robust stochastic bandit algorithm that recovers sublinear regret in presence of an attacker can prevent a sub-optimal arm from being played frequently. However, in adversarial bandit, the situation is fundamentally different. As illustrated in example~\ref{example:not_aligned}, always selecting a sub-optimal target arm may still incur sublinear regret. As a result, robust adversarial bandit algorithms that recover sublinear regret in presence of an adversary (e.g.,~\cite{yang2020adversarial}) can still suffer from an attacker who aims at promoting a target arm.
\begin{example} ~\label{example:not_aligned}
Assume there are $K=2$ arms $a_1$ and $a_2$, and the loss functions are as below.
\begin{equation}
\forall t,  \Ell_t(a)= \left\{
\begin{array}{ll}
1-\sqrt{T}/T & \mbox{ if } a=a_1, \\
1 & \mbox{ if } a=a_2.
\end{array}
\right.
\end{equation}
Note that $a_1$ is the best-in-hindsight arm, but always selecting $a_2$ induces $\sqrt{T}$ regret, which is sublinear and does not contradict the regret guarantee of common bandit algorithms like Exp3.
\end{example}

\section{The Attack Problem Formulation}
While the original loss functions $\Ell_{1:T}$ can already be adversarial, an adversary who desires a target arm often does not have direct control over the environmental loss functions $\Ell_{1:T}$ due to limited power. However, the adversary might be able to perturb the instantiated loss value $\ell_t$ slightly. For instance, a seller cannot directly control the preference of customers over different products, but he can promote his own product by giving out coupons. To model this attack scenario, we introduce another adversary called the ``attacker'', an entity who sits in between the environment and the bandit player and intervenes with the learning procedure. We now formally define the attacker in detail.

(Attacker Knowledge). We consider an (almost) black-box attacker who has very little knowledge of the task and the victim bandit player. In particular,  the attacker does \textbf{\textit{not}} know the clean environmental loss functions $\Ell_{1:T}$ beforehand. Furthermore, the attacker does \textbf{\textit{not}} know the concrete bandit algorithm used by the player. However, the attacker knows the regret rate $\alpha$\footnote{It suffices for the attacker to know an upper bound on the regret rate to derive all the results in our paper, but for simplicity we assume the attacker knows exactly the regret rate.}.

(Attacker Ability) In each time step $t$, the bandit player selects an arm $a_t$ and the environment generates loss $\ell_t=\Ell_t(a_t)$. The attacker sees $a_t$ and $\ell_t$. Before the player observes the loss, the attacker has the ability to perturb the original loss $\ell_t$  to $\tilde \ell_t$. The player then observes the perturbed loss $\tilde \ell_t$ instead of the original loss $\ell_t$. The attacker, however, cannot arbitrarily change the loss value. In particular, the perturbed loss $\tilde\ell_t$ must also be bounded: $\tilde \ell_t\in[0,1], \forall t$.

(Attacker Goal). The goal of the attacker is two-fold. First, the attacker has a desired target arm $a^\dagger$, which can be some sub-optimal arm. The attacker hopes to mislead the player into selecting $a^\dagger$ as often as possible, i.e., maximize $N_T(a^\dagger)=\sum_{t=1}^T \ind{a_t=a^\dagger}$. On the other hand, every time the attacker perturbs the loss $\ell_t$, an attack cost $c_t=|\tilde \ell_t-\ell_t|$ is induced. The attacker thus hopes to achieve a small cumulative attack cost over time, defined as below.

\begin{definition} \label{def:attack_cost}(Cumulative Attack Cost).
The cumulative attack cost of the attacker is defined as
\begin{equation}\label{eq:attack_cost}
C_T=\sum_{t=1}^T c_t, \text{ where } c_t=|\tilde \ell_t-\ell_t|.
\end{equation}
\end{definition}

\fbox{\parbox{\textwidth}{The focus of our paper is to design efficient attack algorithms that can achieve $\E{}{N_T(a^\dagger)}=T-o(T)$ and $\E{}{C_T}=o(T)$ while being \textbf{\textit{agnostic}} to the concrete victim bandit algorithms.}}

Intuitively, if the total loss of the target arm $\sum_{t=1}^T \Ell_t(a^\dagger)$ is small, then the attack goals would be easy to achieve. In the extreme case, if $\Ell_t(a^\dagger)=0, \forall t$, then even without attack, $a^\dagger$ is already the optimal arm and will be selected frequently in most scenarios \footnote{An exceptional case is when there exists some non-target arm $a^\prime$ that also has 0 loss in every round, then $a^\prime$ is equally optimal as $a^\dagger$, and without attack $a^\prime$ will be selected equally often as $a^\dagger$.}. On the other hand, if $\Ell_t(a^\dagger)=1, \forall t$, then the target arm is always the worst arm, and forcing the bandit player to frequently select $a^\dagger$ will require the attacker to significantly reduce $\Ell_t(a^\dagger)$. In later sections, we will formalize this intuition and characterize the attack difficulty.

\section{Attack With Template Loss Functions}
In this section, we first propose a general attack strategy called ``template-based attacks''. The template-based attacks perform loss perturbations according to a sequence of template loss functions $\tilde \Ell_{1:T}$. The templates $\tilde \Ell_{1:T}$ are determined before the bandit play starts. Then in each time step $t$ during the bandit play, the attacker perturbs the original loss $\ell_t$ to $\tilde \ell_t=\tilde \Ell_t(a_t)$. Note that template-based attacks may seem weak at first glance, because the template loss functions are fixed beforehand and thus non-adaptive to the behaviors of the victim bandit player. This is in stark contrast to most prior works such as~\cite{jun2018adversarial}.  However, as we will show in later sections, template-based attacks are efficient and can even achieve the optimal attack cost.

We first make the following important observation, which is a critical property used to prove the main theoretical results in our paper.

\begin{observation}\label{observation}
(Equivalence of Attack) Due to the partial observability of loss functions in the multi-armed bandit framework, running any bandit algorithm in the original environment $\Ell_{1:T}$ with template-based attack $\tilde \Ell_{1:T}$, is equivalent to,  running the same algorithm in an environment with loss functions $\tilde \Ell_{1:T}$. In particular, the standard regret guarantee $R_T=O(T^\alpha)$ holds with respect to the template loss functions $\tilde \Ell_{1:T}$.
\end{observation}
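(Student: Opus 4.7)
The plan is to argue that the observation is essentially a coupling/indistinguishability statement: the bandit player's entire view, and hence their random action sequence, has the same distribution under the two scenarios being compared. Concretely, I would fix any (possibly randomized) bandit algorithm, regard it as a measurable map from observation histories $(a_1,\tilde\ell_1,\ldots,a_{t-1},\tilde\ell_{t-1})$ together with internal randomness $\xi$ to a distribution over the next arm $a_t$, and then compare two processes on the same probability space $(\xi$ coupled across both$)$:

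\textbf{Process A} (original environment with template-based attack): at round $t$ the player draws $a_t$ from the algorithm given history so far, the environment produces $\ell_t=\Ell_t(a_t)$, the attacker replaces it with $\tilde\ell_t=\tilde\Ell_t(a_t)$, and the player observes $\tilde\ell_t$. \textbf{Process B} (environment equal to the templates): at round $t$ the player draws $a_t$ from the algorithm given history so far and observes $\tilde\Ell_t(a_t)$.

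First I would induct on $t$ to show the joint distribution of $(a_1,\tilde\ell_1,\ldots,a_t,\tilde\ell_t)$ is identical in the two processes. The base case is immediate. For the inductive step, conditional on the common history, both processes draw $a_t$ from exactly the same distribution (since the algorithm depends only on the observed history and its internal randomness, both of which agree), and both then produce the observation $\tilde\Ell_t(a_t)$; the key point is that the player never sees $\ell_t$, so the unperturbed value in Process A is irrelevant to the observable process. This is where Observation~\ref{observation} gets its name: partial observability is doing all the work, because in full-information settings the player would see the entire loss vector $\Ell_t$ and could detect the discrepancy.

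Having established distributional equivalence, the regret guarantee follows by invoking Assumption~\ref{assump:no-regret} applied to Process B with loss sequence $\tilde\Ell_{1:T}$. Since $\tilde\Ell_{1:T}$ is determined before the play begins (non-adaptive), it is a valid instance for a no-regret adversarial bandit algorithm, so $\E{}{\sum_{t=1}^T \tilde\Ell_t(a_t)} - \min_a \sum_{t=1}^T \tilde\Ell_t(a) = O(T^\alpha)$ in Process B, and by equivalence in Process A as well. The only subtlety worth flagging is ensuring the template sequence is indeed treated as non-adaptive, i.e., chosen without knowledge of the realized arms; this holds by construction since $\tilde\Ell_{1:T}$ is fixed prior to play. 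I do not anticipate a genuine technical obstacle: the statement is really a conceptual observation, and the main care needed is to write the indistinguishability cleanly (e.g., by coupling the internal randomness of the algorithm across the two processes) rather than to perform any nontrivial estimate.
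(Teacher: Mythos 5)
Your proposal is correct and formalizes exactly the reasoning the paper leaves implicit: the observation is asserted via the one-line "partial observability" justification, and your coupling-plus-induction argument is the natural way to make that rigorous, including the right caveats (the templates are fixed in advance, $\tilde\ell_t\in[0,1]$, and the player never observes $\ell_t$). No gaps; this matches the paper's approach, merely written out in full.
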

We next instantiate the template-based attack on an easy attack scenario and general attack scenarios. For each scenario, we will design the template loss functions $\tilde \Ell_{1:T}$ ,

\subsection{An Easy Attack Scenario}
\label{sec:easy}
In this section, we first study a relatively easy attack scenario, which serves as a motivating example that illustrates two key principles behind the design of our attack algorithms: (1) \textit{Try to keep the loss of the target arm $\Ell_t(a^\dagger)$ unchanged}; and (2) \textit{Boost the loss of all the other non-target arms to the maximum}. The easy attack scenario has the following assumption on the original loss functions, which requires the loss of the target arm to be at least $\rho$ bounded away from the maximum value.
\begin{assumption} (Easy Attack Scenario).\label{assump:easier}
There exists some constant $\rho\in (0, 1]$ such that
\begin{equation}
\label{eq:assump_easier}
\Ell_t(a^\dagger)\in[0, 1-\rho], \forall t\in[T].
\end{equation}
\end{assumption}
The boundedness condition~\eqref{eq:assump_easier} needs to hold over all $T$ rounds. If assumption~\ref{assump:easier} holds, then the attacker can design the template loss functions $\tilde \Ell_t$ as in~\eqref{eq:easy_Ell} to perform attack.
\begin{equation}\label{eq:easy_Ell}
\forall t, \tilde \Ell_t(a)= \left\{
\begin{array}{ll}
\Ell_t(a) & \mbox{ if } a=a^\dagger, \\
1 & \mbox{ otherwise.}
\end{array}
\right.
\end{equation}

\begin{remark}
A few remarks are in order. First, note that although the form of $\tilde \Ell_t(a)$ depends on $\Ell_t(a)$, the attacker does not require knowledge of the original loss functions $\Ell_{1:T}$ beforehand to implement the attack. This is because when $a_t=a^\dagger$, the perturbed loss is $\tilde\ell_t=\tilde \Ell_t(a_t)=\Ell_t(a^\dagger)=\ell_t$ while $\ell_t$ is observable. When $a_t\neq a^\dagger$, $\tilde \ell_t$ can be directly set to 1. Second, note that the target arm $a^\dagger$ becomes the best-in-hindsight arm after attack. Consider running a no-regret bandit algorithm on the perturbed loss $\tilde\Ell_{1:T}$, since $\tilde \Ell_t(a^\dagger)=\Ell_t(a^\dagger)\le 1-\rho$, every time the player selects a non-target arm $a_t\neq a^\dagger$, it will incur at least $\rho$ regret. However, the player is guaranteed to achieve sublinear regret on $\tilde\Ell_{1:T}$ by observation~\ref{observation}, thus non-target arms can at most be selected in sublinear rounds. Finally, note that the loss remains unchanged when the target arm $a^\dagger$ is selected. This design is critical because should the attack be successful, then $a^\dagger$ will be selected in $T-o(T)$ rounds. By keeping the loss of the target arm unchanged, the attacker does not incur attack cost when the target arm is selected. As a result, our design~\eqref{eq:easy_Ell} induces sublinear cumulative attack cost.
\end{remark}
\begin{restatable}{theorem}{thmEasy}
\label{thm:non-maximum_target_loss}
Assume assumption~\ref{assump:easier} holds, and the attacker applies~\eqref{eq:easy_Ell} to perform attack. Then there exists a constant $M>0$ such that the expected number of target arm selections satisfies $\E{}{N_T(a^\dagger)}\ge T-MT^\alpha/\rho$, 
and the expected cumulative attack cost satisfies $\E{}{C_T}\le MT^\alpha/\rho$.
\end{restatable}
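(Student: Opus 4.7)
The plan is to exploit Observation~\ref{observation} to transfer the no-regret guarantee from the clean environment to the template environment, and then to show that under $\tilde \Ell_{1:T}$ as defined in~\eqref{eq:easy_Ell}, every round in which a non-target arm is played contributes at least $\rho$ to the player's regret. The two claims of the theorem then fall out of a short chain of inequalities.

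First I would note that under the template, $a^\dagger$ is the best-in-hindsight arm: for any $a\neq a^\dagger$, $\sum_t \tilde \Ell_t(a) = T$, whereas $\sum_t \tilde \Ell_t(a^\dagger) = \sum_t \Ell_t(a^\dagger) \le (1-\rho)T$ by Assumption~\ref{assump:easier}. So the minimizer in the regret definition is $a^\dagger$ and its cumulative template loss equals its cumulative clean loss. Next I would decompose the expected per-round template loss by conditioning on whether $a_t=a^\dagger$. Writing $p_t=\P{}{a_t=a^\dagger}$, the definition of $\tilde\Ell_t$ gives $\E{}{\tilde\Ell_t(a_t)} = p_t\,\Ell_t(a^\dagger) + (1-p_t)$. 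Subtracting $\sum_t \Ell_t(a^\dagger)$ and regrouping leads to the identity
\begin{equation*}
R_T \;=\; \sum_{t=1}^T (1-p_t)\bigl(1-\Ell_t(a^\dagger)\bigr).
\end{equation*}
Using Assumption~\ref{assump:easier} on each summand and recalling $T-\E{}{N_T(a^\dagger)}=\sum_t(1-p_t)$, this telescopes into $R_T \ge \rho\bigl(T-\E{}{N_T(a^\dagger)}\bigr)$. Combining this with $R_T \le MT^\alpha$ for some constant $M$, which holds with respect to $\tilde\Ell_{1:T}$ by Observation~\ref{observation} and Assumption~\ref{assump:no-regret}, immediately yields the first bound.

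The cost bound then follows almost for free from the design principle \emph{keep $\Ell_t(a^\dagger)$ unchanged}: on every round in which $a_t=a^\dagger$ the attacker sets $\tilde\ell_t=\Ell_t(a^\dagger)=\ell_t$ and pays nothing, while on each of the at most $T-N_T(a^\dagger)$ remaining rounds the per-round cost $c_t=|\tilde\ell_t-\ell_t|$ is trivially bounded by $1$. Hence $C_T \le T-N_T(a^\dagger)$ pointwise, and taking expectations gives $\E{}{C_T}\le T-\E{}{N_T(a^\dagger)}\le MT^\alpha/\rho$, with the same constant $M$.

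I do not expect any genuine obstacle; the proof is essentially routine once the setup is in place. The one conceptual point worth emphasizing is that Observation~\ref{observation} is precisely what licenses applying the victim's own no-regret rate to the manipulated loss sequence, despite the attacker not knowing which no-regret algorithm the victim is running. Everything else is just linearity of expectation plus the $\rho$-separation converting, round by round, into a lower bound on instantaneous regret.
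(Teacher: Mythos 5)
Your proposal is correct and follows essentially the same route as the paper's proof: both reduce to the key inequality $R_T \ge \rho\left(T-\E{}{N_T(a^\dagger)}\right)$ via Observation~\ref{observation} and the $\rho$-gap, then bound the cost by noting the attacker pays nothing on target-arm rounds and at most $1$ otherwise. The only cosmetic difference is that you phrase the regret decomposition in terms of the marginal probabilities $p_t$ while the paper uses indicator variables inside the expectation; these are equivalent by linearity of expectation.
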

\begin{remark}
Note that as the regret rate $\alpha$ decreases, the target arm selections $\E{}{N_T(a^\dagger)}$ increases and the cumulative attack cost $\E{}{C_T}$ reduces. That means, our attack algorithm becomes more effective and efficient if the victim bandit algorithm has a better regret rate. The constant $M$ comes from the regret bound of the victim adversarial bandit algorithm and will depend on the number of arms $K$ (similarly for~\thmref{thm:maximum_target_loss} and~\ref{thm:thmHardReoverEasy}). We do not spell out its concrete form here because our paper aims at designing general attacks against arbitrary adversarial bandit algorithms that satisfy assumption~\ref{assump:no-regret}. The constant term in the regret bound may take different forms for different algorithms. Comparatively, the sublinear regret rate $\alpha$ is more important for attack considerations.
\end{remark}

\subsection{General Attack Scenarios}
Our analysis in the easy attack scenario relies on the fact that every time the player fails to select the target arm $a^\dagger$, at least a constant regret $\rho$ will be incurred.  Therefore, the player can only take non-target arms sublinear times. However, this condition breaks if there exists time steps $t$ where $\Ell_t(a^\dagger)=1$. In this section, we propose a more generic attack strategy, which provably achieves sublinear cumulative attack cost on any loss functions $\Ell_{1:T}$. Furthermore, the proposed attack strategy can recover the result of~\thmref{thm:non-maximum_target_loss} (up to a constant) when it is applied in the easy attack scenario. Specifically, the attacker designs the template loss functions $\tilde \Ell_{t}$ as in~\eqref{eq:maximum_target_loss_Ell} to perform attack. 
\begin{equation}\label{eq:maximum_target_loss_Ell}
\forall t, \tilde \Ell_t(a)= \left\{
\begin{array}{ll}
\min\{1-t^{\alpha+\epsilon-1}, \Ell_t(a)\}& \mbox{ if } a=a^\dagger, \\
1 & \mbox{ otherwise,}
\end{array}
\right.
\end{equation}
where $\epsilon\in [0, 1-\alpha)$ is a free parameter chosen by the attacker. We discuss how the parameter $\epsilon$ affects the attack performance in remark~\ref{remark:eps}.

\begin{remark} Similar to~\eqref{eq:easy_Ell}, the attacker does not require knowledge of the original loss functions $\Ell_{1:T}$ beforehand to implement the attack. When a non-target arm is selected, the attacker always increases the loss to the maximum value 1. On the other hand, when the target arm $a^\dagger$ is selected, then if the observed clean loss value $\ell_t=\Ell_t(a_t)>1-t^{\alpha+\epsilon-1}$, the attacker reduces the loss to $1-t^{\alpha+\epsilon-1}$. Otherwise, the attacker keeps the loss unchanged. In doing so, the attacker ensures that the loss of the target arm $\tilde \Ell_t(a^\dagger)$ is at least $t^{\alpha+\epsilon-1}$ smaller than $\tilde \Ell_t(a)$ for any non-target arm $a\neq a^\dagger$. As a result, $a^\dagger$ becomes the best-in-hindsight arm under $\tilde \Ell_{1:T}$. Note that the gap $t^{\alpha+\epsilon-1}$ diminishes as a function of $t$ since $\epsilon<1-\alpha$. The condition that $\epsilon$ must be strictly smaller than $1-\alpha$ is important to achieving sublinear attack cost, which we will prove later.
\end{remark}

\begin{restatable}{theorem}{thmHard}
\label{thm:maximum_target_loss}
Assume the attacker applies~\eqref{eq:maximum_target_loss_Ell} to perform attack. Then there exists a constant $M>0$ such that the expected number of target arm selections satisfies
\begin{equation}\label{eq:attack_hard_N_T}
\E{}{N_T(a^\dagger)}\ge T-\frac{1}{\alpha+\epsilon}T^{1-\alpha-\epsilon}-MT^{1-\epsilon},
\end{equation}
and the expected cumulative attack cost satisfies
\begin{equation}\label{eq:attack_hard_C_T}
\E{}{C_T}\le \frac{1}{\alpha+\epsilon}T^{1-\alpha-\epsilon}+MT^{1-\epsilon}+\frac{1}{\alpha+\epsilon} T^{\alpha+\epsilon}.
\end{equation}
\end{restatable}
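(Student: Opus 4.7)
The plan is to use Observation~\ref{observation} to reduce the analysis to running a no-regret bandit algorithm on the template sequence $\tilde\Ell_{1:T}$, and then exploit the engineered gap between $a^\dagger$ and the other arms. First I would verify the structural property already sketched in the remark preceding the theorem: by the definition in~\eqref{eq:maximum_target_loss_Ell}, $\tilde\Ell_t(a^\dagger)\le 1-t^{\alpha+\epsilon-1}$ while $\tilde\Ell_t(a)=1$ for $a\ne a^\dagger$, so $a^\dagger$ is the best-in-hindsight arm under $\tilde\Ell_{1:T}$ with a per-round gap of at least $t^{\alpha+\epsilon-1}$ against every non-target arm.

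For the bound on $\E{}{N_T(a^\dagger)}$, Observation~\ref{observation} implies $R_T\le MT^\alpha$ with respect to $\tilde\Ell_{1:T}$. Because $a^\dagger$ minimizes the cumulative template loss and $\tilde\Ell_t(a_t)-\tilde\Ell_t(a^\dagger)$ vanishes on target rounds, one can decompose
\begin{equation*}
MT^\alpha \;\ge\; R_T \;=\; \E{}{\sum_{t=1}^T \tilde\Ell_t(a_t)} - \sum_{t=1}^T \tilde\Ell_t(a^\dagger) \;=\; \E{}{\sum_{t:\,a_t\ne a^\dagger} \bigl(1 - \tilde\Ell_t(a^\dagger)\bigr)} \;\ge\; \E{}{\sum_{t:\,a_t\ne a^\dagger} t^{\alpha+\epsilon-1}}.
\end{equation*}
Since $\alpha+\epsilon-1<0$ and $t\le T$, every summand is bounded below by $T^{\alpha+\epsilon-1}$, so $\E{}{T-N_T(a^\dagger)} \le MT^{1-\epsilon}$, which already implies~\eqref{eq:attack_hard_N_T} (the extra $\tfrac{1}{\alpha+\epsilon} T^{1-\alpha-\epsilon}$ subtracted on the right-hand side only loosens the bound).

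For the cost bound, I would split $C_T$ into target and non-target rounds. On a target round, $\tilde\Ell_t(a^\dagger)$ is either $\Ell_t(a^\dagger)$ (cost $0$) or $1-t^{\alpha+\epsilon-1}$, in which case using $\Ell_t(a^\dagger)\le 1$ gives per-round cost at most $t^{\alpha+\epsilon-1}$; on a non-target round the cost is at most $1$. Hence
\begin{equation*}
\E{}{C_T} \;\le\; \sum_{t=1}^T t^{\alpha+\epsilon-1} + \E{}{T - N_T(a^\dagger)} \;\le\; \frac{T^{\alpha+\epsilon}}{\alpha+\epsilon} + MT^{1-\epsilon},
\end{equation*}
where the first sum is handled by the standard integral bound $\sum_{t=1}^T t^{\alpha+\epsilon-1}\le \tfrac{1}{\alpha+\epsilon}T^{\alpha+\epsilon}$ and the second piece by the previous paragraph; this yields~\eqref{eq:attack_hard_C_T}.

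The main obstacle, and essentially the only place the specific choice of $\epsilon$ matters, is the conversion $\E{}{\sum_{t:\,a_t\ne a^\dagger} t^{\alpha+\epsilon-1}}\le MT^\alpha \Rightarrow \E{}{T-N_T(a^\dagger)}\le MT^{1-\epsilon}$, together with keeping the term $\tfrac{1}{\alpha+\epsilon}T^{\alpha+\epsilon}$ in the cost sublinear. Both reduce to the single parameter constraint $\epsilon<1-\alpha$ in the theorem, which is therefore essential rather than cosmetic; everything else is bookkeeping on top of the equivalence from Observation~\ref{observation}.
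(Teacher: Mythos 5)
Your proof is correct and follows the same overall structure as the paper's: reduce to the template environment via Observation~\ref{observation}, lower-bound the regret by the engineered gap $t^{\alpha+\epsilon-1}$ on non-target rounds, compare against $R_T\le MT^\alpha$, and split the cost into target rounds (cost at most $t^{\alpha+\epsilon-1}$) and non-target rounds (cost at most $1$). The one place you genuinely diverge is the conversion of $\E{}{\sum_{t:\,a_t\neq a^\dagger} t^{\alpha+\epsilon-1}}\le MT^\alpha$ into a bound on $\E{}{T-N_T(a^\dagger)}$: you lower-bound every summand pointwise by $T^{\alpha+\epsilon-1}$ (valid because the exponent is negative), whereas the paper rearranges the non-target rounds to the end of the horizon, controls the resulting partial sums by integral comparisons, and then applies $(1-x)^c\le 1-cx$. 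Your route is more elementary and in fact yields the slightly tighter conclusion $\E{}{T-N_T(a^\dagger)}\le MT^{1-\epsilon}$ with no $\frac{1}{\alpha+\epsilon}T^{1-\alpha-\epsilon}$ term; as you correctly note, this implies the stated inequality~\eqref{eq:attack_hard_N_T}, and the analogous observation makes your cost bound imply~\eqref{eq:attack_hard_C_T}. Both arguments hinge on $\epsilon<1-\alpha$ in exactly the two places you identify, so nothing is missing.
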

\begin{remark}\label{remark:eps}

According to~\eqref{eq:attack_hard_N_T}, the target arm will be selected more frequently as $\epsilon$ grows. This is because the attack~\eqref{eq:maximum_target_loss_Ell} enforces that the loss of the target arm $\tilde \Ell_t(a^\dagger)$ is at least $t^{\alpha+\epsilon-1}$ smaller than the loss of non-target arms. As $\epsilon$ increases, the gap becomes larger, thus the bandit algorithm would further prefer $a^\dagger$. The cumulative attack cost, however, does not decrease monotonically as a function of $\epsilon$. This is because while larger $\epsilon$ results in more frequent target arm selections, the per-round attack cost may also increase. For example, if $\Ell_t(a^\dagger)=1, \forall t$, then whenever $a^\dagger$ is selected, the attacker incurs attack cost $t^{\alpha+\epsilon-1}$, which grows as $\epsilon$ increases.
\end{remark}

\begin{corollary}\label{col:minimum_cost}
Assume the attacker applies~\eqref{eq:maximum_target_loss_Ell} to perform attack. Then when the attacker chooses $\epsilon=\frac{1-\alpha}{2}$, the expected cumulative attack cost achieves the minimum value asymptotically. Correspondingly, we have $\E{}{N_T(a^\dagger)}= T-O(T^{\frac{1+\alpha}{2}})$ and $\E{}{C_T}=O(T^{\frac{1+\alpha}{2}})$.
\end{corollary}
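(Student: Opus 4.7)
The plan is to start directly from the cost bound in \thmref{thm:maximum_target_loss}, namely
\[
\E{}{C_T}\le \frac{1}{\alpha+\epsilon}T^{1-\alpha-\epsilon}+MT^{1-\epsilon}+\frac{1}{\alpha+\epsilon} T^{\alpha+\epsilon},
\]
and minimize the asymptotic order of the right-hand side over the admissible range $\epsilon\in[0,1-\alpha)$. The three exponents are $1-\alpha-\epsilon$, $1-\epsilon$, and $\alpha+\epsilon$; since $\alpha\ge 0$ the second dominates the first, so the asymptotic rate is governed by $\max\{1-\epsilon,\ \alpha+\epsilon\}$. The first of these is strictly decreasing in $\epsilon$ and the second is strictly increasing, so this maximum is minimized where the two coincide, i.e.\ $1-\epsilon=\alpha+\epsilon$, giving $\epsilon^\star=\frac{1-\alpha}{2}$.

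Next I would check admissibility and evaluate the bound at $\epsilon^\star$. Admissibility: $\epsilon^\star\ge 0$ since $\alpha\le 1$, and $\epsilon^\star<1-\alpha$ since $\alpha<1$ by \assref{assump:no-regret}, so $\epsilon^\star\in[0,1-\alpha)$ as required. Plugging in, both $1-\epsilon^\star$ and $\alpha+\epsilon^\star$ equal $\frac{1+\alpha}{2}$, while $1-\alpha-\epsilon^\star=\frac{1-\alpha}{2}\le\frac{1+\alpha}{2}$, so
\[
\E{}{C_T}\le \tfrac{2}{1+\alpha}T^{(1-\alpha)/2}+MT^{(1+\alpha)/2}+\tfrac{2}{1+\alpha}T^{(1+\alpha)/2}=O\!\left(T^{(1+\alpha)/2}\right).
\]
A parallel substitution into \eqref{eq:attack_hard_N_T} yields $\E{}{N_T(a^\dagger)}\ge T-\tfrac{2}{1+\alpha}T^{(1-\alpha)/2}-MT^{(1+\alpha)/2}=T-O(T^{(1+\alpha)/2})$.

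To see that $\epsilon^\star$ is asymptotically optimal and not merely a convenient balance point, I would argue by contradiction: for any other $\epsilon\in[0,1-\alpha)$, at least one of $1-\epsilon$ or $\alpha+\epsilon$ strictly exceeds $\frac{1+\alpha}{2}$, so the corresponding term in the cost bound is of strictly larger polynomial order than $T^{(1+\alpha)/2}$. Hence no admissible $\epsilon$ produces a smaller asymptotic cost exponent than $\epsilon^\star$.

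The argument is essentially a one-variable min-max balancing exercise, so there is no deep obstacle; the only subtlety is checking that $\epsilon^\star$ lies strictly inside the admissible interval $[0,1-\alpha)$ (which requires $\alpha<1$, as guaranteed by \assref{assump:no-regret}) and confirming that the two remaining terms in the cost bound do not dominate $T^{(1+\alpha)/2}$ at this choice of $\epsilon$, both of which are immediate from $\frac{1-\alpha}{2}\le\frac{1+\alpha}{2}$.
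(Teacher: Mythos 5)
Your proposal is correct and follows the same route the paper intends: the corollary is stated as an immediate consequence of Theorem~\ref{thm:maximum_target_loss}, obtained by balancing the exponents $1-\epsilon$ and $\alpha+\epsilon$ (the term $T^{1-\alpha-\epsilon}$ being dominated since $\alpha\ge\tfrac12$) and substituting $\epsilon=\tfrac{1-\alpha}{2}$ into both bounds. Your additional checks that $\epsilon^\star$ lies in $[0,1-\alpha)$ and that no other admissible $\epsilon$ gives a smaller exponent are sound and, if anything, more explicit than the paper, which omits the proof entirely.
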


We now show that our attack~\eqref{eq:maximum_target_loss_Ell} recovers the results in~\thmref{thm:non-maximum_target_loss} when it is applied in the easy attack scenario. We first provide another version of the theoretical bounds on $\E{}{N_T(a^\dagger)}$ and $\E{}{C_T}$ that depends on how close $\Ell_t(a^\dagger)$ is to the maximum value.

\begin{restatable}{theorem}{thmHardRecoverEasy} 
\label{thm:thmHardReoverEasy}
Let $\rho\in(0,1]$ be any constant. Define $\T_\rho=\{t\mid \Ell_t(a^\dagger)>1-\rho\}$, i.e., the set of rounds where $\Ell_t(a^\dagger)$ is within  distance $\rho$ to the maximum loss value. Let $|\T_\rho|=\tau$. Also assume that the attacker applies~\eqref{eq:maximum_target_loss_Ell} to perform attack, then there exists a constant $M>0$ such that the expected number of target arm selections satisfies
\begin{equation}
\E{}{N_T(a^\dagger)}\ge T-\rho^{\frac{1}{\alpha+\epsilon-1}}-\tau-MT^\alpha/\rho,
\end{equation}
and the cumulative attack cost satisfies
\begin{equation}
\E{}{C_T}\le \rho^{\frac{1}{\alpha+\epsilon-1}}+\tau+MT^\alpha/\rho.
\end{equation}
\end{restatable}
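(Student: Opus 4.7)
The plan is to reduce to the easy-attack analysis of Theorem~\ref{thm:non-maximum_target_loss} on the subset of rounds where the original target loss is already bounded away from $1$ and where the diminishing perturbation budget $t^{\alpha+\epsilon-1}$ has shrunk below $\rho$. The rounds excluded from this ``easy subset'' will then be absorbed by a crude counting argument, and the constants $\rho^{1/(\alpha+\epsilon-1)}$ and $\tau$ in the statement will emerge directly from the size of that excluded set.

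First, I would introduce the threshold $T_1=\lceil \rho^{1/(\alpha+\epsilon-1)}\rceil$; since $\alpha+\epsilon-1<0$, this is $\rho$ to a negative power, hence $T_1$ is large when $\rho$ is small and $T_1=O(\rho^{1/(\alpha+\epsilon-1)})$. Partition $[T]$ into the ``bad'' set $\mathcal{B}=\{t<T_1\}\cup \T_\rho$ and the ``easy'' set $\mathcal{E}=[T]\setminus\mathcal{B}$, so that $|\mathcal{B}|\le T_1+\tau$. The key observation is that for every $t\in\mathcal{E}$ we have simultaneously $t^{\alpha+\epsilon-1}\le\rho$ and $\Ell_t(a^\dagger)\le 1-\rho$, which forces the min in~\eqref{eq:maximum_target_loss_Ell} to be attained by $\Ell_t(a^\dagger)$. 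Therefore, on $\mathcal{E}$, $\tilde\Ell_t(a^\dagger)=\Ell_t(a^\dagger)\le 1-\rho$ while $\tilde\Ell_t(a)=1$ for all $a\ne a^\dagger$, which matches the easy-scenario template~\eqref{eq:easy_Ell} exactly.

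Second, I would invoke the no-regret property via Observation~\ref{observation}: since the victim has regret $MT^\alpha$ against $\tilde\Ell_{1:T}$ and $a^\dagger$ is a valid (indeed best) arm,
$$MT^\alpha \ge \E{}{\sum_{t=1}^T \bigl[\tilde\Ell_t(a_t)-\tilde\Ell_t(a^\dagger)\bigr]} \ge \rho\cdot \E{}{\bigl|\{t\in\mathcal{E}:a_t\ne a^\dagger\}\bigr|},$$
because on $\mathcal{E}$ every non-target pull incurs instantaneous regret at least $\rho$ against $\tilde\Ell$. Combined with the trivial $|\mathcal{B}|\le T_1+\tau$, the total expected number of non-target pulls is at most $T_1+\tau+MT^\alpha/\rho$, and subtracting from $T$ yields the claimed lower bound on $\E{}{N_T(a^\dagger)}$. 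For the cost, split along the same partition and on whether $a_t=a^\dagger$: on $\mathcal{E}$ with $a_t=a^\dagger$ the per-round cost is exactly $0$; on $\mathcal{E}$ with $a_t\ne a^\dagger$ the cost is at most $1$ per round and occurs at most $MT^\alpha/\rho$ times in expectation; on $\mathcal{B}$ the cost is at most $1$ regardless, contributing at most $T_1+\tau$. Summing gives the claimed upper bound on $\E{}{C_T}$ after absorbing harmless constants into $M$.

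The main obstacle is really just pinning down the threshold $T_1$: the whole argument hinges on showing that $t\ge T_1$ and $t\notin\T_\rho$ together guarantee $1-t^{\alpha+\epsilon-1}\ge \Ell_t(a^\dagger)$, so that the template collapses to the easy form. Once this equivalence is in place, the regret-to-selection-count conversion is identical to the one used in Theorem~\ref{thm:non-maximum_target_loss}, and everything outside $\mathcal{E}$ is handled by the trivial bound $|\mathcal{B}|\le T_1+\tau$ together with per-round cost at most $1$.
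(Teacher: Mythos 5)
Your proposal is correct and follows essentially the same route as the paper's proof: the paper defines $t_0=\rho^{1/(\alpha+\epsilon-1)}$ and the set $\T_0=\{t\mid t\ge t_0,\ t\notin\T_\rho\}$, which is exactly your easy set $\mathcal{E}$, shows the template collapses to the easy form there, converts the $O(T^\alpha)$ regret into at most $MT^\alpha/\rho$ non-target pulls on that set, and handles the at most $t_0+\tau$ excluded rounds by the trivial per-round cost bound of $1$. The only cosmetic difference is your ceiling on the threshold, which is absorbed into constants.
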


\begin{remark}
In the easy attack scenario, there exists some $\rho$ such that $\tau=0$, thus compared to~\thmref{thm:non-maximum_target_loss}, the more generic attack~\eqref{eq:maximum_target_loss_Ell} induces an additional constant term $\rho^{\frac{1}{\alpha+\epsilon-1}}$ in the bounds of $\E{}{N_T(a^\dagger)}$ and $\E{}{C_T}$, which is negligible for large enough $T$.
\end{remark}

\section{Attack Cost Lower Bound}
We have proposed two attack strategies targeting the easy and general attack scenarios separately. In this section, we show that if an attack algorithm achieves $T-o(T)$ target arm selections and is also \textbf{\textit{victim-agnostic}}, then the cumulative attack cost is at least $\Omega(T^\alpha)$. Note that since we want to derive victim-agnostic lower bound, it is sufficient to pick a particular victim bandit algorithm that guarantees $O(T^\alpha)$ regret and then prove that any victim-agnostic attacker must induce at least some attack cost in order to achieve $T-o(T)$ target arm selections.  Specifically, we consider the most popular Exp3 algorithm (see algorithm~\ref{alg:Exp3} in the appendix). We first provide the following key lemma, which characterizes a lower bound on the number of arm selections for Exp3.

\begin{restatable}{lemma}{lemLB}
\label{N_T_lower}
Assume the bandit player applies the Exp3 algorithm with parameter $\eta$ (see~\eqref{eq:eta} in the appendix) and initial arm selection probability $\pi_1$. Let the loss functions be $\Ell_{1:T}$. Then $\forall a\in \A$, the total number of rounds where $a$ is selected, $N_T(a)$, satisfies
\begin{equation}\label{eq:N_T_lower}
\E{}{N_T(a)}\ge T \pi_1(a)-\eta  T \sum_{t=1}^{T} \E{}{ \pi_t(a) \Ell_t(a)},
\end{equation}
where $\pi_t$ is the arm selection probability at round $t$. Furthermore, since $ \pi_t(a)\le 1$, we have
\begin{equation}\label{eq:N_T_lower_2}
\E{}{N_T(a)}\ge T \pi_1(a)-\eta  T \sum_{t=1}^{T} \Ell_t(a).
\end{equation}
\end{restatable}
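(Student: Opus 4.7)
The plan is to exploit the multiplicative structure of the Exp3 update to obtain a one-step lower bound on $\pi_{t+1}(a)$, then iterate and sum. First, I would recall the Exp3 step: the player samples $a_t\sim\pi_t$, receives $\ell_t=\Ell_t(a_t)$, forms the importance-weighted estimator $\hat\ell_t(b)=(\ell_t/\pi_t(a_t))\ind{a_t=b}$, and updates $\pi_{t+1}(a)=\pi_t(a)\exp(-\eta\hat\ell_t(a))/Z_t$ with normalizer $Z_t=\sum_b\pi_t(b)\exp(-\eta\hat\ell_t(b))$. Because losses are nonnegative, $\exp(-\eta\hat\ell_t(b))\le 1$ for every $b$, so $Z_t\le 1$. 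Combining this with the elementary inequality $e^{-x}\ge 1-x$ (valid for all real $x$) yields the key pointwise lower bound $\pi_{t+1}(a)\ge \pi_t(a)(1-\eta\hat\ell_t(a))$.

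The second step is to take expectation and use unbiasedness of the importance-weighted estimator. Conditional on the history $\mathcal{F}_{t-1}$, both $\pi_t$ and $\Ell_t$ are deterministic, and a direct calculation gives $\E{a_t\sim\pi_t}{\pi_t(a)\hat\ell_t(a)\mid\mathcal{F}_{t-1}}=\pi_t(a)\Ell_t(a)$. The crucial cancellation to notice is that although $\hat\ell_t(a)$ itself can be as large as $1/\pi_t(a)$, the product $\pi_t(a)\hat\ell_t(a)=\ell_t\ind{a_t=a}$ is bounded in $[0,1]$, so the linearization $e^{-x}\ge 1-x$ is not vacuous once we integrate. Taking total expectation then yields the one-step recursion $\E{}{\pi_{t+1}(a)}\ge \E{}{\pi_t(a)}-\eta\,\E{}{\pi_t(a)\Ell_t(a)}$.

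Finally, I would iterate this one-step bound from round $1$ up to round $t$, obtaining $\E{}{\pi_t(a)}\ge \pi_1(a)-\eta\sum_{s=1}^{t-1}\E{}{\pi_s(a)\Ell_s(a)}$, and then sum over $t=1,\dots,T$. Using $\E{}{N_T(a)}=\sum_{t=1}^T\E{}{\pi_t(a)}$ together with the crude bound $T-s\le T$ on the resulting double sum produces exactly \eqref{eq:N_T_lower}. The second inequality \eqref{eq:N_T_lower_2} then follows immediately from $\pi_s(a)\le 1$.

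The main obstacle I expect is the apparent blow-up of $\hat\ell_t(a)$ when $\pi_t(a)$ is small, which at first glance makes the pointwise bound $\pi_{t+1}(a)\ge \pi_t(a)(1-\eta\hat\ell_t(a))$ look vacuous (its right-hand side can be very negative). This is resolved by never separating $\hat\ell_t(a)$ from the factor $\pi_t(a)$ in front of it: we always carry the product $\pi_t(a)\hat\ell_t(a)$, which is bounded, and only invoke the linearization after taking expectation. Everything else is elementary telescoping together with the $Z_t\le 1$ observation.
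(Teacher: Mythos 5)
Your proposal is correct and follows essentially the same route as the paper's proof: bound the normalizer by $1$ to get $\pi_{t+1}(a)\ge\pi_t(a)\exp(-\eta\hat\ell_t(a))$, linearize with $e^{-x}\ge 1-x$, take expectations using the unbiasedness of the importance-weighted estimator so that $\E{}{\pi_t(a)\hat\ell_t(a)}=\E{}{\pi_t(a)\Ell_t(a)}$, then telescope and sum, bounding the double sum by a factor of $T$. Your added remark about never separating $\hat\ell_t(a)$ from the prefactor $\pi_t(a)$ is a nice clarification of why the linearization survives integration, but the underlying argument is identical.
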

\begin{remark}\lemref{N_T_lower} provides two different lower bounds on the number of arm selections based on the loss functions for each arm $a$.~\eqref{eq:N_T_lower_2} shows that the lower bound on $\E{}{N_T(a)}$ increases as the cumulative loss $\sum_{t=1}^T \Ell_t(a)$ of arm $a$ becomes smaller, which coincides with the intuition. In particular, if $\pi_1$ is initialized to the uniform distribution and $\eta$ is picked as $\beta T^{-\frac{1}{2}}$ for some constant $\beta$, the lower bound~\eqref{eq:N_T_lower_2} becomes $\E{}{N_T(a)}\ge T/K-\beta \sqrt{T} \sum_{t=1}^{T} \Ell_t(a)$. One direct conclusion here is that if the loss function of an arm $a$ is always zero, i.e., $\Ell_t(a)=0, \forall t$, then arm $a$ must be selected at least $T/K$ times in expectation.
\end{remark}

Now we provide our main result in~\thmref{thm:lower_bound}, which shows that for a special implementation of Exp3 that achieves $O(T^\alpha)$ regret, any attacker must induce $\Omega(T^\alpha)$ cumulative attack cost.
\begin{restatable}{theorem}{thmLB}
\label{thm:lower_bound}
Assume some victim-agnostic attack algorithm achieves $\E{}{N_T(a^\dagger)}=T-o(T)$ on all victim bandit algorithms that has regret rate $O(T^\alpha)$, where $\alpha\in[\frac{1}{2}, 1)$. Then there exists a bandit task such that the attacker must induce at least expected attack cost $\E{}{C_T}=\Omega(T^\alpha)$ on some victim algorithm. Specifically, one such victim is the Exp3 algorithm with parameter $\eta=\Theta(T^{-\alpha})$.
\end{restatable}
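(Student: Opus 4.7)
The plan is to exhibit one specific bandit instance and one specific Exp3 victim, and then use \lemref{N_T_lower} to force a lower bound on the perturbed cumulative loss of every non-target arm, which in turn lower-bounds the attack cost. First I would fix the instance $\Ell_t(a^\dagger) = 1$ and $\Ell_t(a) = 0$ for every $a \neq a^\dagger$ and every $t$. This is the ``hardest'' task for the attacker: the target is uniformly the worst arm, and every non-target arm has zero cumulative loss, so without attack Exp3 essentially never picks $a^\dagger$. The victim is Exp3 with uniform initial distribution $\pi_1(a) = 1/K$ and step size $\eta = \Theta(T^{-\alpha})$; the standard analysis gives $R_T = O(T^\alpha)$, so this victim satisfies Assumption~\ref{assump:no-regret}.

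Next, since $\sum_a N_T(a) = T$ and the attacker achieves $\E{}{N_T(a^\dagger)} = T - o(T)$ by hypothesis, each non-target arm $a$ must satisfy $\E{}{N_T(a)} = o(T)$. Let $\tilde \Ell_t(a)$ denote the expected perturbed loss the attacker would output if $a_t = a$, conditioned on the history $\mathcal{F}_{t-1}$; this is well-defined because $\Ell_t(a)$ is deterministic and is observed by the attacker before perturbing. With this interpretation the Exp3 dynamics are exactly those analyzed in \lemref{N_T_lower} with loss sequence $\tilde \Ell_{1:T}$, so
\[
\E{}{N_T(a)} \ge \frac{T}{K} - \eta T \sum_{t=1}^T \E{}{\pi_t(a)\, \tilde \Ell_t(a)}.
\]
Combining this with $\E{}{N_T(a)} = o(T)$ and $\eta = \Theta(T^{-\alpha})$ yields $\sum_t \E{}{\pi_t(a)\, \tilde \Ell_t(a)} = \Omega(T^\alpha)$ for every non-target arm.

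Finally, because $\Ell_t(a) = 0$ for every non-target arm, the per-round cost is $c_t = \tilde\ell_t$ whenever $a_t \neq a^\dagger$, so conditioning on $\mathcal{F}_{t-1}$ and then on the draw of $a_t$ from $\pi_t$ gives $\E{}{c_t \mid \mathcal{F}_{t-1}} \ge \sum_{a \neq a^\dagger} \pi_t(a)\, \tilde \Ell_t(a)$ (the target-arm contribution, which involves $|1 - \tilde\Ell_t(a^\dagger)|$, only helps). Summing over $t$, taking expectation, and adding the $\Omega(T^\alpha)$ contribution from each of the $K - 1$ non-target arms yields $\E{}{C_T} = \Omega(T^\alpha)$.

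The main obstacle is that \lemref{N_T_lower} is phrased for a fixed loss sequence, whereas in the attack setting the loss function seen by Exp3 is chosen adaptively by the attacker from $(a_t, \ell_t)$ and past play. The remedy is the conditional definition of $\tilde \Ell_t(a)$ above: once one conditions on $\mathcal{F}_{t-1}$, the proof of \lemref{N_T_lower} (which rests on the conditional unbiasedness of Exp3's importance-weighted estimator) carries over unchanged, and the remainder of the argument is bookkeeping. A secondary subtlety is handling attacker randomization, which is absorbed by taking $\tilde \Ell_t(a)$ to be the conditional expectation of $\tilde\ell_t$ given $a_t = a$ and $\mathcal{F}_{t-1}$.
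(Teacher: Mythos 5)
Your proposal is correct and follows essentially the same route as the paper's proof: the same key lemma (\lemref{N_T_lower}) applied to a non-target arm with identically zero clean loss, the same choice of Exp3 victim with $\eta=\Theta(T^{-\alpha})$, and the same identification of the attack cost with the perturbed cumulative loss $\sum_t \E{}{\pi_t(a)\tilde\Ell_t(a)}$ of that arm. The only differences are cosmetic (the paper uses two arms with target loss $0.5$ rather than $1$), and your explicit treatment of the adaptivity of $\tilde\Ell_t$ via conditioning is if anything more careful than the paper's.
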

The lower bound $\Omega(T^\alpha)$ matches the upper bound proved in both~\thmref{thm:non-maximum_target_loss} and~\thmref{thm:thmHardReoverEasy} up to a constant, thus our attacks are asymptotically optimal in the easy attack scenario. However, there is a gap compared to the upper bound $O(T^{\frac{1+\alpha}{2}})$ proved for the general attack scenario (corollary~\ref{col:minimum_cost}). The gap diminishes as $\alpha$ approaches 1, but how to completely close this gap remains an open problem.

\section{Experiments}
We now perform empirical evaluations of our attacks. We consider two victim adversarial bandit algorithms: the Exp3 algorithm (see~\algref{alg:Exp3} in the appendix), and a robust version of Exp3 called ExpRb (see~\cite{yang2020adversarial}). The ExpRb assumes that the attacker has a fixed attack budget $\Phi$. When $\Phi=O(\sqrt{T})$, the ExpRb recovers the regret of Exp3. However, our attack does not have a fixed budget beforehand. Nevertheless, we pretend that ExpRb assumes some budget $\Phi$ (may not be bounded by the cumulative attack cost of our attacker) and evaluate its performance for different $\Phi$'s. Note that as illustrated in example~\ref{example:not_aligned}, robust bandit algorithms that can recover sublinear regret may still suffer from an attacker who aims at promoting a target arm in the adversarial bandit setting.

\subsection{An Easy Attack Example}
In out first example, we consider a bandit problem with $K=2$ arms, $a_1$ and $a_2$. The loss function is $\forall t, \Ell_t(a_1)=0.5$ and $\Ell_t(a_2)=0$. 
Without attack $a_2$ is the best-in-hindsight arm and will be selected most of the times. The attacker, however, aims at forcing arm $a_1$ to be selected in almost very round. 
Therefore, the target arm is $a^\dagger=a_1$. Note that $\Ell_t(a^\dagger)=0.5,\forall t$, thus this example falls into the easy attack scenario, and we apply~\eqref{eq:easy_Ell} to perform attack.

\begin{figure}[h!]
\centering
\begin{subfigure}{0.245\textwidth}
  \includegraphics[width=1\textwidth]{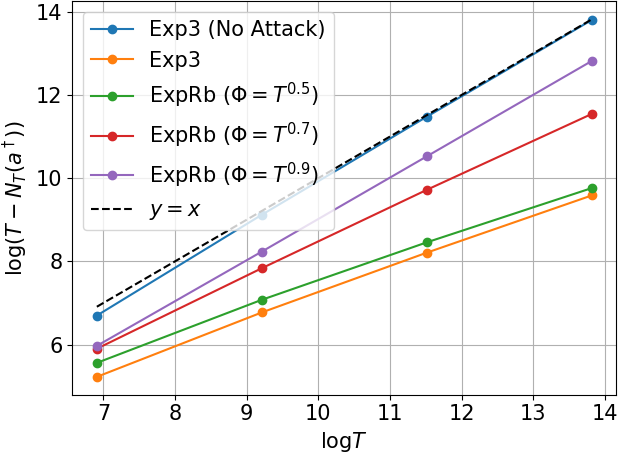}  
  \subcaption{$T-N_T(a^\dagger)$ of~\eqref{eq:easy_Ell}.}
  \label{fig:easy_N}
\end{subfigure}
\begin{subfigure}{0.245\textwidth}
  \includegraphics[width=1\textwidth]{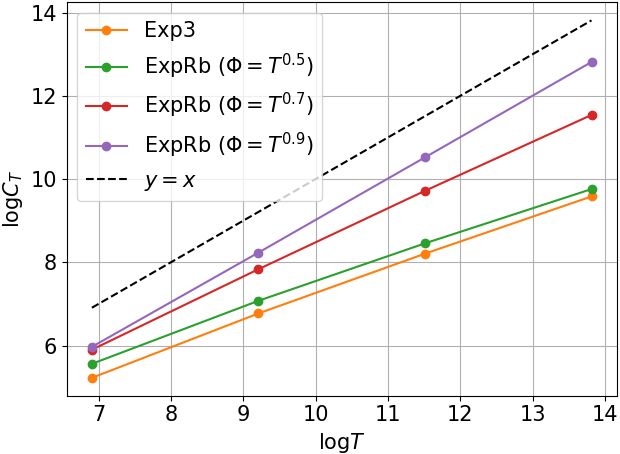}  
  \subcaption{$C_T$ of~\eqref{eq:easy_Ell}.}
  \label{fig:easy_cost}
\end{subfigure}
\begin{subfigure}{0.245\textwidth}
  \includegraphics[width=1\textwidth]{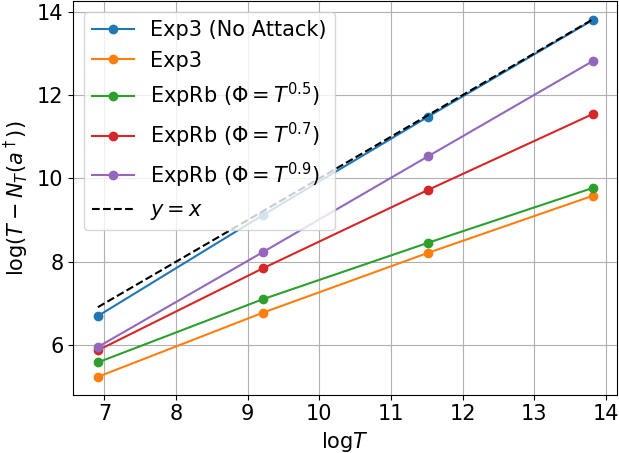}  
  \subcaption{$T-N_T(a^\dagger)$ of~\eqref{eq:maximum_target_loss_Ell}.}
  \label{fig:HardOnEasy_N}
\end{subfigure}
\begin{subfigure}{0.245\textwidth}
  \includegraphics[width=1\textwidth]{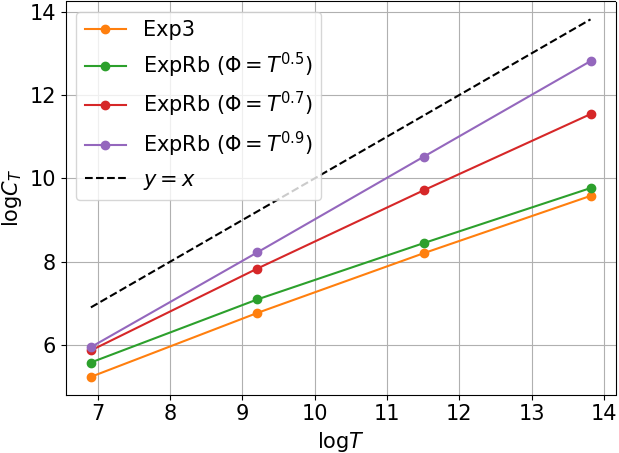}  
  \subcaption{$C_T$ of~\eqref{eq:maximum_target_loss_Ell}.}
  \label{fig:HardOnEasy_cost}
\end{subfigure}
\caption{Using~\eqref{eq:easy_Ell} and~\eqref{eq:maximum_target_loss_Ell} to perform attack in an easy attack scenario.}
\label{fig:easy} 
\end{figure}
In the first experiment, we let the total horizon be $T=10^3, 10^4, 10^5$ and $10^6$. 
For each $T$, we run the Exp3 and ExpRb under attack for $T$ rounds, and compute the number of ``non-target'' arm selections $T-N_T(a^\dagger)$. 
We repeat the experiment by 10 trials and take the average.
In Figure~\ref{fig:easy_N}, we show $\log (T-N_T(a^\dagger))$, i.e., the log value of the total number of averaged ``non-target'' arm selections, as a function of $\log T$. The error bars are tiny small, thus we ignore them in the plot.
Smaller value of $\log (T-N_T(a^\dagger))$ means better attack performance. 
Note that when no attack happens (blue line), the Exp3 algorithm almost does not select the target arm $a^\dagger$. 
Specifically, for $T=10^6$, the Exp3 selects $a^\dagger$ in $1.45\times 10^4$ rounds, which is only $1.5\%$ of the total horizon. 
Under attack though, for $T=10^3, 10^4, 10^5,10^6$, the attacker misleads Exp3 to select $a^\dagger$ in $8.15\times 10^2, 9.13\times10^3, 9.63\times10^4,$ and $9.85\times 10^5$ rounds, which are $81.5\%, 91.3\%, 96.3\%$ and $98.5\%$ of the total horizon. We also plotted the line $y=x$ for comparison.  Note that the slope of $\log(T-N_T(a^\dagger))$ is smaller than that of $y=x$, which means $T-N_T(a^\dagger)$ grows sublinearly as $T$ increases. This matches our theoretical results in~\thmref{thm:non-maximum_target_loss}. 
For the other victim ExpRb, we consider different levels of attack budget $\Phi$. 
The attacker budget assumed by ExpRb must be sublinear, since otherwise the ExpRb cannot recover sublinear regret, and thus not practically useful. In particular, we consider $\Phi=T^{0.5}, T^{0.7}$ and $T^{0.9}$. 
Note that for $\Phi=T^{0.7}$ and $T^{0.9}$, the ExpRb cannot recover the $O(\sqrt{T})$ regret of Exp3.
For $T=10^6$, our attack forces ExpRb to select the target arm in $9.83\times 10^6$, $8.97\times 10^6$, and $6.32\times 10^6$ rounds for the three attacker budget above. This corresponds to $98.3\%$, $89.7\%$, and $63.2\%$ of the total horizon respectively. Note that the ExpRb is indeed more robust than Exp3 against our attack. However, our attack still successfully misleads the ExpRb to select the target $a^\dagger$ very frequently.  Also note that the attack performance degrades as the attacker budget $\Phi$ grows. This is because the ExpRb becomes more robust as it assumes a larger attack budget $\Phi$.

Figure~\ref{fig:easy_cost} shows the attack cost averaged over 10 trials. For Exp3, the cumulative attack costs are $1.85\times 10^2, 8.72\times10^2, 3.67\times 10^3$, and $1.45\times 10^4$ for the four different $T$'s. On average, the per-round attack cost is $0.19, 0.09. 0.04$, and $0.01$ respectively. Note that the per-round attack cost diminishes as $T$ grows. Again, we plot the line $y=x$ for comparison. Note that slope of $\log C_T$ is smaller than that of $y=x$. This suggests that $C_T$ increases sublinearly as $T$ grows, which is consistent with our theoretical results in~\thmref{thm:non-maximum_target_loss}. For ExpRb, for $T=10^6$, our attack incurs cumulative attack costs $1.73\times 10^4, 1.03\times 10^5$ and $3.68\times 10^5$ when ExpRb assumes $\Phi=T^{0.5}, T^{0.7}$ and $T^{0.9}$ respectively. On average, the per-round attack cost is $0.02, 0.10$ and $0.37$. Note that our attack induces larger attack cost on ExpRb than Exp3, which means ExpRb is more resilient against our attacks. Furthermore, the attack cost grows as ExpRb assumes a larger attack budget $\Phi$. This is again due to that a larger $\Phi$ implies that ExpRb is more prepared against attacks, thus is more robust.

Next we apply the general attack~\eqref{eq:maximum_target_loss_Ell} to verify that~\eqref{eq:maximum_target_loss_Ell} can recover the results of~\thmref{thm:non-maximum_target_loss} in the easy attack scenario. We fix $\epsilon=0.25$ in~\eqref{eq:maximum_target_loss_Ell}. In Figures~\ref{fig:HardOnEasy_N} and~\ref{fig:HardOnEasy_cost}, we show the number of target arm selections and the cumulative attack cost. For the Exp3 victim, for the four different $T$'s, the attack~\eqref{eq:maximum_target_loss_Ell} forces the target arm to be selected in $8.12\times 10^2, 9.12\times 10^3, 9.63\times 10^4, 9.85\times 10^5$ rounds, which is $81.2\%, 91.2\%, 96.3\%$ and $98.5\%$ of the total horizon respectively. Compared to~\eqref{eq:easy_Ell}, the attack performance is just slightly worse. The corresponding cumulative attack costs are $1.89\times 10^2, 8.76\times 10^2, 3.68\times 10^3$, and $1.45\times 10^4$. On average, the per-round attack cost is $0.19, 0.09, 0.04$ and $0.01$. Compared to~\eqref{eq:easy_Ell}, the attack cost is almost the same.

\subsection{A General Attack Example}
In our second example, we consider a bandit problem with $K=2$ arms and the loss function is $\forall t, \Ell_t(a_1)=1$ and $\Ell_t(a_2)=0$. The attacker desires target arm $a^\dagger=a_1$. This example is hard to attack because the target arm has the maximum loss across the entire $T$ horizon. We apply the general attack~\eqref{eq:maximum_target_loss_Ell} to perform attack. We consider $T=10^3, 10^4, 10^5$, and $10^6$. The results reported in this section are also averaged over 10 independent  trials.

In the first experiment, we let the victim bandit algorithm be Exp3 and study how the parameter $\epsilon$ affects the performance of the attack. We let $\epsilon=0.1, 0.25$ and 0.4. In Figure~\ref{fig:hard_N_eps}, we show the number of target arm selections for different $T$'s. Without attack, the Exp3 selects $a^\dagger$ in only $1.20\times 10^2, 5.27\times 10^2, 2.12\times 10^3$, and $8.07\times 10^3$ rounds, which are $12\%, 5.3\%, 2.1\%$ and $0.81\%$ of the total horizon. In Figure~\ref{fig:hard_N_eps}, we show $\log (T-N_T(a^\dagger))$ as a function of $\log T$ for different $\epsilon$'s. Note that as $\epsilon$ grows, our attack~\eqref{eq:maximum_target_loss_Ell} enforces more target arm selections, which is consistent with~\thmref{thm:maximum_target_loss}. In particular, for $\epsilon=0.4$, our attack forces the target arm to be selected in  $8.34\times 10^2, 9.13\times 10^3, 9.58\times 10^4, 9.81\times 10^5$ rounds, which are $83.4\%, 91.3\%, 95.8\%$ and $98.1\%$ of the total horizon. In Figure~\ref{fig:hard_cost_eps}, we show the cumulative attack cost. Note that according to corollary~\ref{col:minimum_cost}, the cumulative attack cost achieves the minimum value at $\epsilon=0.25$. This is exactly what we see in Figure~\ref{fig:hard_cost_eps}. Specifically, for $\epsilon=0.25$, the cumulative attack costs are $4.20\times 10^2, 2.84\times 10^3, 1.85\times 10^4$, and $1.14\times 10^5$. On average, the per-round attack cost is $0.42, 0.28, 0.19$ and $0.11$ respectively. Note that the per-round attack cost diminishes as $T$ grows. In both Figure~\ref{fig:hard_N_eps} and~\ref{fig:hard_cost_eps}, we plot the line $y=x$ for comparison. Note that both $T-\E{}{N_T(a^\dagger)}$ and $C_T$ grow sublinearly as $T$ increases, which verifies our results in~\thmref{thm:maximum_target_loss}.

\begin{figure*}[htb]
\centering
\begin{subfigure}{0.245\textwidth}
  \includegraphics[width=1\textwidth]{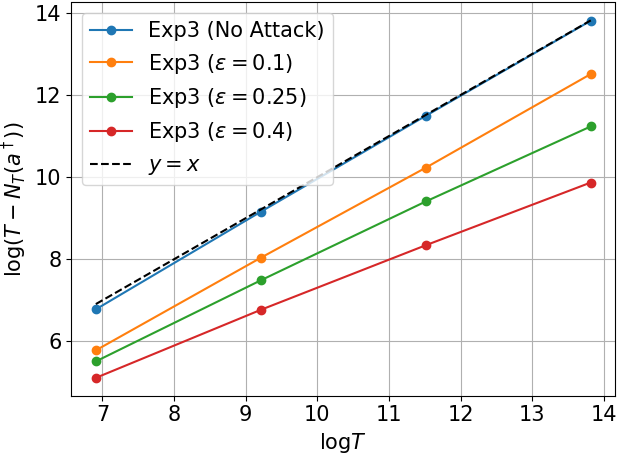}  
  \subcaption{$T-N_T(a^\dagger)$ as $\epsilon$ varies.}
  \label{fig:hard_N_eps}
\end{subfigure}
\begin{subfigure}{0.245\textwidth}
  \includegraphics[width=1\textwidth]{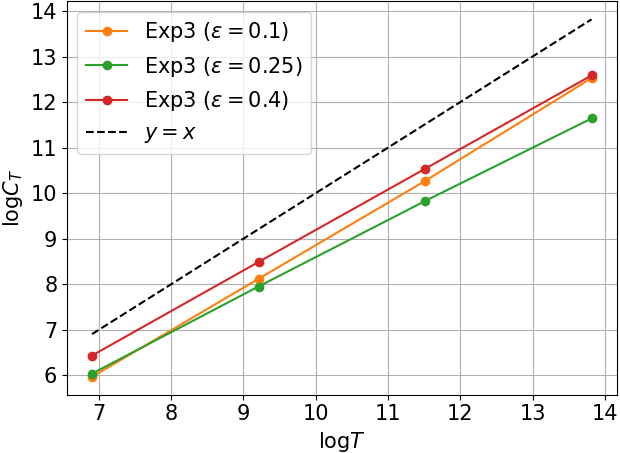}  
  \subcaption{$C_T$ as $\epsilon$ varies.}
  \label{fig:hard_cost_eps}
\end{subfigure}
\begin{subfigure}{0.245\textwidth}
  \includegraphics[width=1\textwidth]{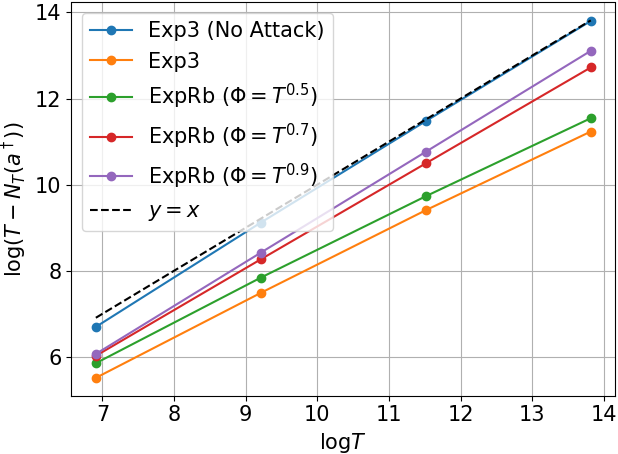}  
  \subcaption{$T-N_T(a^\dagger)$ as $\Phi$ varies.}
  \label{fig:hard_N_phi}
\end{subfigure}
\begin{subfigure}{0.245\textwidth}
  \includegraphics[width=1\textwidth]{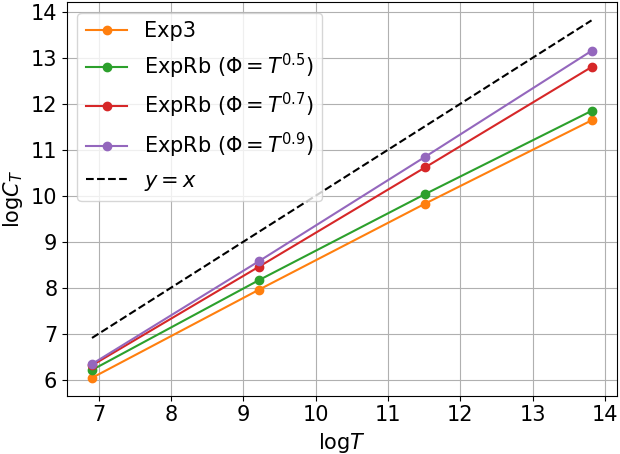}  
  \subcaption{$C_T$ as $\Phi$ varies.}
  \label{fig:hard_cost_phi}
\end{subfigure}
\caption{Using~\eqref{eq:maximum_target_loss_Ell} to perform attack in general attack scenarios.}
\label{fig:hard} 
\end{figure*}

In our second experiment, we evaluate the performance of our attack~\eqref{eq:maximum_target_loss_Ell} on the robust adversarial bandit algorithm ExpRb~\cite{yang2020adversarial}. We fixed $\epsilon=0.25$ in~\eqref{eq:maximum_target_loss_Ell}. We consider three levels of attacker budget $\Phi=T^{0.5}, T^{0.7}$ and $T^{0.9}$ in ExpRb, corresponding to increasing power of the attacker. In Figure~\ref{fig:hard_N_phi}, we show the total number of target arm selections. For $T=10^6$, our attack forces the Exp3 to select the target arm in $9.24\times 10^5$ rounds, which is $92.4\%$ of the total rounds. For the ExpRb victim, for the three different attack budgets $\Phi$'s, our attack forces ExpRb to select the target arm in $8.97\times10^5, 6.65\times 10^5$ and $5.07\times 10^5$ rounds, corresponding to $89.7\%, 66.5\%$ and $50.7\%$ of the total horizon respectively. Note that when $\Phi=T^{0.5}$, i.e., the ExpRb can recover the regret of Exp3, but our attack still forces  target arm selection in almost $90\%$ of rounds. This is smaller than the $92.4\%$ on the Exp3 victim, which demonstrates that ExpRb indeed is more robust than Exp3. Nevertheless, the ExpRb failed to defend against our attack. Even when ExpRb assumes a very large attacker budget like $\Phi=T^{0.9}$, our attack still forces the target arm selection in $50.7\%$ of rounds.

In Figure~\ref{fig:hard_cost_phi}, we show the cumulative attack costs. For $T=10^6$, the cumulative attack cost on the Exp3 victim is $1.14\times 10^5$. On average, the per-round attack cost is $0.11$. For the ExpRb victim, for $T=10^6$, the cumulative attack costs are $1.40\times10^5$, $3.62\times 10^5$, and $5.15\times 10^5$ for the three different attacker budgets $\Phi=T^{0.5}, T^{0.7}, T^{0.9}$. The per-round attack cost is $0.14, 0.36$ and $0.51$ respectively. Note that when $\Phi=T^{0.5}$, the ExpRb recovers the regret of Exp3. The per-round attack cost for ExpRb is 0.14, which is slightly higher than Exp3. This again shows that ExpRb is indeed more robust than Exp3. Also note that the attack cost grows as ExpRb assumes a larger attacker budget. This is reasonable since larger attacker budget $\Phi$ implies stronger robustness of ExpRb.

\section{Related Works}
Existing research on attacks of multi-armed bandit mostly fall into the topic of data poisoning~\cite{ma2019data}. Prior works are limited to poisoning attacks on stochastic bandit algorithms. One line of work studies reward poisoning on vanilla bandit algorithms like UCB and $\epsilon$-greedy~\cite{jun2018adversarial,zuo2020near, niss1001you, xu2021observation, ma2018data, liu2019data,wang2021linear, ma2021adversarial, xuoblivious}, contextual and linear bandits~\cite{garcelon2020adversarial}, and also best arm identification algorithms~\cite{altschuler2019best}. Another line focuses on action poisoning attacks~\cite{liu2020action,  liu2021efficient} where the attacker perturbs the selected arm instead of the reward signal. Recent study generalizes the reward attacks to broader sequential decision making scenarios such as multi-agent games~\cite{ma2021game}  and reinforcement learning~\cite{ma2019policy,zhang2020adaptive,sun2020vulnerability,rakhsha2021policy,xu2021transferable,liu2021provably}, where the problem structure is more complex than bandits. In the multi-agent decision-making scenarios, a related security threat is an internal agent who adopts strategic behaviors to mislead competitors and achieves desired objectives such as~\cite{deng2019strategizing,gleave2019adversarial}.

There are also prior works that design robust algorithms in the context of stochastic bandits~\cite{feng2020intrinsic, guan2020robust, rangi2021secure, ito2021optimal}, linear and contextual bandits~\cite{bogunovic2021stochastic,ding2021robust,zhao2021linear, yang2021robust, yang2021contextual}, dueling bandits~\cite{agarwal2021stochastic}, graphical bandits~\cite{lu2021stochastic}, best-arm identification~\cite{zhong2021probabilistic}, combinatorial bandit~\cite{dong2022combinatorial}, and multi-agent~\cite{vial2022robust} or federated bandit learning scenarios~\cite{mitra2021robust,demirel2022federated}. Most of the robust algorithms are designed to recover low regret even in presence of reward corruptions. However, as we illustrated in example~\ref{example:not_aligned}, recovering low regret  does not guarantee successful defense against an attacker who wants to promote a target arm in the adversarial bandit scenario. How to defend against such attacks remains an under-explored question.

Of particular interest to our paper is a recent work on designing adversarial bandit algorithms robust to reward corruptions~\cite{yang2020adversarial}. The paper assumes that the attacker has a prefixed budget of attack cost $\Phi$, and then designs a robust adversarial bandit algorithm ExpRb, which achieves regret that scales linearly as the attacker budget $\Phi$ grows $R_T=O(\sqrt{K\log K T}+K\Phi \log T)$. As a result, the ExpRb can tolerate any attacker with budget $\Phi=O(\sqrt{T})$ while recovering the standard regret rate of Exp3. We point out that one limitation of ExpRb is that it requires prior knowledge of a fixed attack budget $\Phi$. However, our attack does not have a fixed budget beforehand. Instead, our attack budget depends on the behavior of the bandit player. Therefore, the ExpRb does not directly apply as a defense against our attack. Nevertheless, in our experiments, we pretend that ExpRb assumes some attack budget $\Phi$ and evaluate its performance under our attack.

\section{Conclusion}
We studied reward poisoning attacks on adversarial multi-armed bandit algorithms. We proposed attack strategies in both easy and general attack scenarios, and proved that our attack can successfully mislead any no-regret bandit algorithm into selecting a target arm in $T-o(T)$ rounds while incurring only $o(T)$ cumulative attack cost. We also provided a lower bound on the cumulative attack cost that any victim-agnostic attacker must induce in order to achieve $T-o(T)$ target arm selections, which matches the upper bound achieved by our attack. This shows that our attack is asymptotically optimal.
Our study reveals critical security caveats in bandit-based applications, and it remains an open problem how to defend against our attacker whose attack goal is to promote a desired target arm instead of boosting the regret of the victim bandit algorithm.

\bibliography{ref}

\begin{thebibliography}{52}
\providecommand{\natexlab}[1]{#1}
\providecommand{\url}[1]{\texttt{#1}}
\expandafter\ifx\csname urlstyle\endcsname\relax
  \providecommand{\doi}[1]{doi: #1}\else
  \providecommand{\doi}{doi: \begingroup \urlstyle{rm}\Url}\fi

\bibitem[Agarwal et~al.(2021)Agarwal, Agarwal, and
  Patil]{agarwal2021stochastic}
Arpit Agarwal, Shivani Agarwal, and Prathamesh Patil.
\newblock Stochastic dueling bandits with adversarial corruption.
\newblock In \emph{Algorithmic Learning Theory}, pp.\  217--248. PMLR, 2021.

\bibitem[Altschuler et~al.(2019)Altschuler, Brunel, and
  Malek]{altschuler2019best}
Jason Altschuler, Victor-Emmanuel Brunel, and Alan Malek.
\newblock Best arm identification for contaminated bandits.
\newblock \emph{J. Mach. Learn. Res.}, 20\penalty0 (91):\penalty0 1--39, 2019.

\bibitem[Auer et~al.(1995)Auer, Cesa-Bianchi, Freund, and
  Schapire]{auer1995gambling}
Peter Auer, Nicolo Cesa-Bianchi, Yoav Freund, and Robert~E Schapire.
\newblock Gambling in a rigged casino: The adversarial multi-armed bandit
  problem.
\newblock In \emph{Proceedings of IEEE 36th annual foundations of computer
  science}, pp.\  322--331. IEEE, 1995.

\bibitem[Bogunovic et~al.(2021)Bogunovic, Losalka, Krause, and
  Scarlett]{bogunovic2021stochastic}
Ilija Bogunovic, Arpan Losalka, Andreas Krause, and Jonathan Scarlett.
\newblock Stochastic linear bandits robust to adversarial attacks.
\newblock In \emph{International Conference on Artificial Intelligence and
  Statistics}, pp.\  991--999. PMLR, 2021.

\bibitem[Bubeck \& Cesa-Bianchi(2012)Bubeck and Cesa-Bianchi]{bubeck2012regret}
S{\'e}bastien Bubeck and Nicolo Cesa-Bianchi.
\newblock Regret analysis of stochastic and nonstochastic multi-armed bandit
  problems.
\newblock \emph{arXiv preprint arXiv:1204.5721}, 2012.

\bibitem[Demirel et~al.(2022)Demirel, Yildirim, and
  Tekin]{demirel2022federated}
Ilker Demirel, Yigit Yildirim, and Cem Tekin.
\newblock Federated multi-armed bandits under byzantine attacks.
\newblock \emph{arXiv preprint arXiv:2205.04134}, 2022.

\bibitem[Deng et~al.(2019)Deng, Schneider, and Sivan]{deng2019strategizing}
Yuan Deng, Jon Schneider, and Balasubramanian Sivan.
\newblock Strategizing against no-regret learners.
\newblock \emph{Advances in neural information processing systems}, 32, 2019.

\bibitem[Ding et~al.(2021)Ding, Hsieh, and Sharpnack]{ding2021robust}
Qin Ding, Cho-Jui Hsieh, and James Sharpnack.
\newblock Robust stochastic linear contextual bandits under adversarial
  attacks.
\newblock \emph{arXiv preprint arXiv:2106.02978}, 2021.

\bibitem[Dong et~al.(2022)Dong, Li, Li, and Wang]{dong2022combinatorial}
Jing Dong, Ke~Li, Shuai Li, and Baoxiang Wang.
\newblock Combinatorial bandits under strategic manipulations.
\newblock In \emph{Proceedings of the Fifteenth ACM International Conference on
  Web Search and Data Mining}, pp.\  219--229, 2022.

\bibitem[Even-Dar et~al.(2009)Even-Dar, Kakade, and Mansour]{even2009online}
Eyal Even-Dar, Sham~M Kakade, and Yishay Mansour.
\newblock Online markov decision processes.
\newblock \emph{Mathematics of Operations Research}, 34\penalty0 (3):\penalty0
  726--736, 2009.

\bibitem[Feki \& Capdevielle(2011)Feki and Capdevielle]{feki2011autonomous}
Afef Feki and Veronique Capdevielle.
\newblock Autonomous resource allocation for dense lte networks: A multi armed
  bandit formulation.
\newblock In \emph{2011 IEEE 22nd International Symposium on Personal, Indoor
  and Mobile Radio Communications}, pp.\  66--70. IEEE, 2011.

\bibitem[Feng et~al.(2020)Feng, Parkes, and Xu]{feng2020intrinsic}
Zhe Feng, David Parkes, and Haifeng Xu.
\newblock The intrinsic robustness of stochastic bandits to strategic
  manipulation.
\newblock In \emph{International Conference on Machine Learning}, pp.\
  3092--3101. PMLR, 2020.

\bibitem[Garcelon et~al.(2020)Garcelon, Roziere, Meunier, Tarbouriech, Teytaud,
  Lazaric, and Pirotta]{garcelon2020adversarial}
Evrard Garcelon, Baptiste Roziere, Laurent Meunier, Jean Tarbouriech, Olivier
  Teytaud, Alessandro Lazaric, and Matteo Pirotta.
\newblock Adversarial attacks on linear contextual bandits.
\newblock \emph{Advances in Neural Information Processing Systems},
  33:\penalty0 14362--14373, 2020.

\bibitem[Gerchinovitz \& Lattimore(2016)Gerchinovitz and
  Lattimore]{gerchinovitz2016refined}
S{\'e}bastien Gerchinovitz and Tor Lattimore.
\newblock Refined lower bounds for adversarial bandits.
\newblock \emph{Advances in Neural Information Processing Systems}, 29, 2016.

\bibitem[Gleave et~al.(2019)Gleave, Dennis, Wild, Kant, Levine, and
  Russell]{gleave2019adversarial}
Adam Gleave, Michael Dennis, Cody Wild, Neel Kant, Sergey Levine, and Stuart
  Russell.
\newblock Adversarial policies: Attacking deep reinforcement learning.
\newblock \emph{arXiv preprint arXiv:1905.10615}, 2019.

\bibitem[Guan et~al.(2020)Guan, Ji, Bucci~Jr, Hu, Palombo, Liston, and
  Liang]{guan2020robust}
Ziwei Guan, Kaiyi Ji, Donald~J Bucci~Jr, Timothy~Y Hu, Joseph Palombo, Michael
  Liston, and Yingbin Liang.
\newblock Robust stochastic bandit algorithms under probabilistic unbounded
  adversarial attack.
\newblock In \emph{Proceedings of the AAAI Conference on Artificial
  Intelligence}, volume~34, pp.\  4036--4043, 2020.

\bibitem[Ito(2021)]{ito2021optimal}
Shinji Ito.
\newblock On optimal robustness to adversarial corruption in online decision
  problems.
\newblock \emph{Advances in Neural Information Processing Systems}, 34, 2021.

\bibitem[Jun et~al.(2018)Jun, Li, Ma, and Zhu]{jun2018adversarial}
Kwang-Sung Jun, Lihong Li, Yuzhe Ma, and Jerry Zhu.
\newblock Adversarial attacks on stochastic bandits.
\newblock \emph{Advances in Neural Information Processing Systems}, 31, 2018.

\bibitem[Kuleshov \& Precup(2014)Kuleshov and Precup]{kuleshov2014algorithms}
Volodymyr Kuleshov and Doina Precup.
\newblock Algorithms for multi-armed bandit problems.
\newblock \emph{arXiv preprint arXiv:1402.6028}, 2014.

\bibitem[Li et~al.(2010)Li, Chu, Langford, and Schapire]{li2010contextual}
Lihong Li, Wei Chu, John Langford, and Robert~E Schapire.
\newblock A contextual-bandit approach to personalized news article
  recommendation.
\newblock In \emph{Proceedings of the 19th international conference on World
  wide web}, pp.\  661--670, 2010.

\bibitem[Liu \& Shroff(2019)Liu and Shroff]{liu2019data}
Fang Liu and Ness Shroff.
\newblock Data poisoning attacks on stochastic bandits.
\newblock In \emph{International Conference on Machine Learning}, pp.\
  4042--4050. PMLR, 2019.

\bibitem[Liu \& Lai(2020)Liu and Lai]{liu2020action}
Guanlin Liu and Lifeng Lai.
\newblock Action-manipulation attacks on stochastic bandits.
\newblock In \emph{ICASSP 2020-2020 IEEE International Conference on Acoustics,
  Speech and Signal Processing (ICASSP)}, pp.\  3112--3116. IEEE, 2020.

\bibitem[Liu \& Lai(2021{\natexlab{a}})Liu and Lai]{liu2021efficient}
Guanlin Liu and Lifeng Lai.
\newblock Efficient action poisoning attacks on linear contextual bandits.
\newblock \emph{arXiv preprint arXiv:2112.05367}, 2021{\natexlab{a}}.

\bibitem[Liu \& Lai(2021{\natexlab{b}})Liu and Lai]{liu2021provably}
Guanlin Liu and Lifeng Lai.
\newblock Provably efficient black-box action poisoning attacks against
  reinforcement learning.
\newblock \emph{Advances in Neural Information Processing Systems}, 34,
  2021{\natexlab{b}}.

\bibitem[Lu et~al.(2021)Lu, Wang, and Zhang]{lu2021stochastic}
Shiyin Lu, Guanghui Wang, and Lijun Zhang.
\newblock Stochastic graphical bandits with adversarial corruptions.
\newblock In \emph{Proceedings of the 35th AAAI Conference on Artificial
  Intelligence (AAAI), to appear}, 2021.

\bibitem[Ma(2021)]{ma2021adversarial}
Yuzhe Ma.
\newblock \emph{Adversarial Attacks in Sequential Decision Making and Control}.
\newblock PhD thesis, The University of Wisconsin-Madison, 2021.

\bibitem[Ma et~al.(2018)Ma, Jun, Li, and Zhu]{ma2018data}
Yuzhe Ma, Kwang-Sung Jun, Lihong Li, and Xiaojin Zhu.
\newblock Data poisoning attacks in contextual bandits.
\newblock In \emph{International Conference on Decision and Game Theory for
  Security}, pp.\  186--204. Springer, 2018.

\bibitem[Ma et~al.(2019{\natexlab{a}})Ma, Zhang, Sun, and Zhu]{ma2019policy}
Yuzhe Ma, Xuezhou Zhang, Wen Sun, and Jerry Zhu.
\newblock Policy poisoning in batch reinforcement learning and control.
\newblock \emph{Advances in Neural Information Processing Systems}, 32,
  2019{\natexlab{a}}.

\bibitem[Ma et~al.(2019{\natexlab{b}})Ma, Zhu, and Hsu]{ma2019data}
Yuzhe Ma, Xiaojin Zhu, and Justin Hsu.
\newblock Data poisoning against differentially-private learners: Attacks and
  defenses.
\newblock \emph{arXiv preprint arXiv:1903.09860}, 2019{\natexlab{b}}.

\bibitem[Ma et~al.(2021)Ma, Wu, and Zhu]{ma2021game}
Yuzhe Ma, Young Wu, and Xiaojin Zhu.
\newblock Game redesign in no-regret game playing.
\newblock \emph{arXiv preprint arXiv:2110.11763}, 2021.

\bibitem[Mitra et~al.(2021)Mitra, Hassani, and Pappas]{mitra2021robust}
Aritra Mitra, Hamed Hassani, and George Pappas.
\newblock Robust federated best-arm identification in multi-armed bandits.
\newblock \emph{arXiv e-prints}, pp.\  arXiv--2109, 2021.

\bibitem[Neu et~al.(2012)Neu, Gyorgy, and Szepesv{\'a}ri]{neu2012adversarial}
Gergely Neu, Andras Gyorgy, and Csaba Szepesv{\'a}ri.
\newblock The adversarial stochastic shortest path problem with unknown
  transition probabilities.
\newblock In \emph{Artificial Intelligence and Statistics}, pp.\  805--813.
  PMLR, 2012.

\bibitem[Niss()]{niss1001you}
Laura Niss.
\newblock What you see may not be what you get: Ucb bandit algorithms robust to
  $\varepsilon$-contamination.
\newblock \emph{Ann Arbor}, 1001:\penalty0 48109.

\bibitem[Radlinski et~al.(2008)Radlinski, Kleinberg, and
  Joachims]{radlinski2008learning}
Filip Radlinski, Robert Kleinberg, and Thorsten Joachims.
\newblock Learning diverse rankings with multi-armed bandits.
\newblock In \emph{Proceedings of the 25th international conference on Machine
  learning}, pp.\  784--791, 2008.

\bibitem[Rakhsha et~al.(2021)Rakhsha, Radanovic, Devidze, Zhu, and
  Singla]{rakhsha2021policy}
Amin Rakhsha, Goran Radanovic, Rati Devidze, Xiaojin Zhu, and Adish Singla.
\newblock Policy teaching in reinforcement learning via environment poisoning
  attacks.
\newblock \emph{Journal of Machine Learning Research}, 22\penalty0
  (210):\penalty0 1--45, 2021.

\bibitem[Rangi et~al.(2021)Rangi, Tran-Thanh, Xu, and
  Franceschetti]{rangi2021secure}
Anshuka Rangi, Long Tran-Thanh, Haifeng Xu, and Massimo Franceschetti.
\newblock Secure-ucb: Saving stochastic bandits from poisoning attacks via
  limited data verification.
\newblock \emph{arXiv preprint arXiv:2102.07711}, 2021.

\bibitem[Slivkins(2019)]{slivkins2019introduction}
Aleksandrs Slivkins.
\newblock Introduction to multi-armed bandits.
\newblock \emph{arXiv preprint arXiv:1904.07272}, 2019.

\bibitem[Sun et~al.(2020)Sun, Huo, and Huang]{sun2020vulnerability}
Yanchao Sun, Da~Huo, and Furong Huang.
\newblock Vulnerability-aware poisoning mechanism for online rl with unknown
  dynamics.
\newblock \emph{arXiv preprint arXiv:2009.00774}, 2020.

\bibitem[Vial et~al.(2022)Vial, Shakkottai, and Srikant]{vial2022robust}
Daniel Vial, Sanjay Shakkottai, and R~Srikant.
\newblock Robust multi-agent bandits over undirected graphs.
\newblock \emph{arXiv preprint arXiv:2203.00076}, 2022.

\bibitem[Wang et~al.(2021)Wang, Xu, and Wang]{wang2021linear}
Huazheng Wang, Haifeng Xu, and Hongning Wang.
\newblock When are linear stochastic bandits attackable?
\newblock \emph{arXiv preprint arXiv:2110.09008}, 2021.

\bibitem[Whittle(1980)]{whittle1980multi}
Peter Whittle.
\newblock Multi-armed bandits and the gittins index.
\newblock \emph{Journal of the Royal Statistical Society: Series B
  (Methodological)}, 42\penalty0 (2):\penalty0 143--149, 1980.

\bibitem[Xu et~al.(2021{\natexlab{a}})Xu, Wang, Raizman, and
  Rabinovich]{xu2021transferable}
Hang Xu, Rundong Wang, Lev Raizman, and Zinovi Rabinovich.
\newblock Transferable environment poisoning: Training-time attack on
  reinforcement learning.
\newblock In \emph{Proceedings of the 20th International Conference on
  Autonomous Agents and MultiAgent Systems}, pp.\  1398--1406,
  2021{\natexlab{a}}.

\bibitem[Xu et~al.()Xu, Kumar, and Abernethy]{xuoblivious}
Yinglun Xu, Bhuvesh Kumar, and Jacob Abernethy.
\newblock Oblivious data corruption attack for stochastic multi-arm bandit
  algorithms.

\bibitem[Xu et~al.(2021{\natexlab{b}})Xu, Kumar, and
  Abernethy]{xu2021observation}
Yinglun Xu, Bhuvesh Kumar, and Jacob~D Abernethy.
\newblock Observation-free attacks on stochastic bandits.
\newblock \emph{Advances in Neural Information Processing Systems}, 34,
  2021{\natexlab{b}}.

\bibitem[Yang \& Ren(2021)Yang and Ren]{yang2021robust}
Jianyi Yang and Shaolei Ren.
\newblock Robust bandit learning with imperfect context.
\newblock \emph{arXiv preprint arXiv:2102.05018}, 2021.

\bibitem[Yang et~al.(2020)Yang, Hajiesmaili, Talebi, Lui, Wong,
  et~al.]{yang2020adversarial}
Lin Yang, Mohammad~Hassan Hajiesmaili, Mohammad~Sadegh Talebi, John~CS Lui,
  Wing~Shing Wong, et~al.
\newblock Adversarial bandits with corruptions: Regret lower bound and
  no-regret algorithm.
\newblock In \emph{NeurIPS}, 2020.

\bibitem[Yang(2021)]{yang2021contextual}
Luting Yang.
\newblock \emph{Contextual Bandits in Imperfect Environments: Analysis and
  Applications}.
\newblock University of California, Riverside, 2021.

\bibitem[Zhang et~al.(2020)Zhang, Ma, Singla, and Zhu]{zhang2020adaptive}
Xuezhou Zhang, Yuzhe Ma, Adish Singla, and Xiaojin Zhu.
\newblock Adaptive reward-poisoning attacks against reinforcement learning.
\newblock In \emph{International Conference on Machine Learning}, pp.\
  11225--11234. PMLR, 2020.

\bibitem[Zhao et~al.(2021)Zhao, Zhou, and Gu]{zhao2021linear}
Heyang Zhao, Dongruo Zhou, and Quanquan Gu.
\newblock Linear contextual bandits with adversarial corruptions.
\newblock \emph{arXiv preprint arXiv:2110.12615}, 2021.

\bibitem[Zhong et~al.(2021)Zhong, Cheung, and Tan]{zhong2021probabilistic}
Zixin Zhong, Wang~Chi Cheung, and Vincent Tan.
\newblock Probabilistic sequential shrinking: A best arm identification
  algorithm for stochastic bandits with corruptions.
\newblock In \emph{International Conference on Machine Learning}, pp.\
  12772--12781. PMLR, 2021.

\bibitem[Zhou et~al.(2019)Zhou, Wang, Mamani, and Coffey]{zhou2019tumor}
Zhijin Zhou, Yingfei Wang, Hamed Mamani, and David~G Coffey.
\newblock How do tumor cytogenetics inform cancer treatments? dynamic risk
  stratification and precision medicine using multi-armed bandits.
\newblock \emph{Dynamic Risk Stratification and Precision Medicine Using
  Multi-armed Bandits (June 17, 2019)}, 2019.

\bibitem[Zuo(2020)]{zuo2020near}
Shiliang Zuo.
\newblock Near optimal adversarial attack on ucb bandits.
\newblock \emph{arXiv preprint arXiv:2008.09312}, 2020.

\end{thebibliography}
\bibliographystyle{iclr2023_conference}

\newpage
\appendix
\section{Appendix}
\subsection{Detailed Proofs}
\thmEasy*

\begin{proof}
Note that under attack, the bandit player is equivalently facing a new environment with loss function $\tilde \Ell_t$. Also note that the target arm $a^\dagger$ is the optimal arm with respect to $\tilde \Ell_t$, thus the regret of the bandit player is
\begin{equation}
\begin{aligned}
R_T &= \E{}{\sum_{t=1}^T\tilde\Ell_t(a_t)}-\min_a \sum_{t=1}^T \tilde\Ell_t(a)\\
&= \E{}{\sum_{t=1}^T\left(\tilde\Ell_t(a_t)- \tilde\Ell_t(a^\dagger)\right)}\\
&=\E{}{\sum_{t=1}^T\ind{a_t\neq a^\dagger}\left(\tilde\Ell_t(a_t)- \sum_{t=1}^T \tilde\Ell_t(a^\dagger)\right)}\\
&\ge \rho\E{}{ \sum_{t=1}^T \ind{a_t\neq a^\dagger}}\\
&=\rho \left(T-\E{}{N_T(a^\dagger)}\right)
\end{aligned}
\end{equation}
where the second-to-last inequality is due to assumption~\ref{assump:easier}. On the other hand, since the player applies some no-regret algorithm, we must have $R_T\le MT^\alpha$ for some constant $M$.
Therefore, we have
\begin{equation}\label{eq:bound_Na_01}
 \rho \left(T-\E{}{N_T(a^\dagger)}\right)\le MT^\alpha,
 \end{equation} 
which gives $\E{}{N_T(a^\dagger)}\ge T-MT^\alpha/\rho$.

Next we upper bound the expected attack cost. We have proved that under attack, the target arm $a^\dagger$ will be selected in $T-MT^\alpha/\rho$ rounds. Then note that by our attack design~\eqref{eq:easy_Ell}, the attacker only incurs attack cost when non-target arm is selected. Therefore, we have
\begin{equation}
\begin{aligned}
\E{}{C_T}&=\E{}{\sum_{t=1}^T |\tilde \Ell(a_t)-\Ell(a_t)|}\\
&=\E{}{\sum_{t=1}^T \ind{a_t\neq a^\dagger}|\tilde \Ell(a_t)-\Ell(a_t)|}\\
&\le \E{}{\sum_{t=1}^T \ind{a_t\neq a^\dagger}}\\
&=T-\E{}{N_T(a^\dagger)}\\
&\le MT^\alpha/\rho.\\
\end{aligned}
\end{equation}
where we have used $|\tilde \Ell(a_t)-\Ell(a_t)|\le 1$.
\end{proof}

\thmHard*

\begin{proof}
Under attack, the bandit player is equivalently facing loss sequence $\tilde \Ell_{1:T}$. 
Note that $a^\dagger$ is the optimal arm with respect to $\tilde \Ell_{1:T}$, thus the regret is
\begin{equation}
\begin{aligned}
R_T&= \E{}{\sum_{t=1}^T\left(\tilde\Ell_t(a_t)- \tilde\Ell_t(a^\dagger)\right)}\\
&= \E{}{\sum_{t=1}^T \ind{a_t\neq a^\dagger} \left(\tilde\Ell_t(a_t)-\tilde\Ell_t(a_t^\dagger)\right)}\\
&=\E{}{\sum_{t=1}^T \ind{a_t\neq a^\dagger} \left(1-\tilde\Ell_t(a_t^\dagger)\right)}\\
&\ge \E{}{\sum_{t=1}^T \ind{a_t\neq a^\dagger} t^{\alpha+\epsilon-1}},
\end{aligned}
\label{eq:ERiT}
\end{equation}
where we have used that $\tilde\Ell_t(a_t^\dagger)\le 1-t^{\alpha+\epsilon-1}$.
Now note that since $\epsilon<1-\alpha$, $t^{\alpha+\epsilon-1}$ is monotonically decreasing as $t$ grows, thus we have
\begin{equation}
\begin{aligned}
&\sum_{t=1}^T \ind{a_t\neq a^\dagger} t^{\alpha+\epsilon-1}\ge \sum_{t=N_T(a^\dagger)+1}^T t^{\alpha+\epsilon-1}\\
&=\sum_{t=1}^T t^{\alpha+\epsilon-1}-\sum_{t=1}^{N_T(a^\dagger)} t^{\alpha+\epsilon-1}.
\end{aligned}
\end{equation}
Next, by examining the area under curve, we obtain
\begin{equation}
\begin{aligned}
\sum_{t=1}^T t^{\alpha+\epsilon-1}&\ge \int_{1}^{T} t^{\alpha+\epsilon-1} dt=\frac{T^{\alpha+\epsilon} -1}{\alpha+\epsilon}.
\end{aligned}
\end{equation}
Similarly, we can also derive
\begin{equation}
\begin{aligned}
\sum_{t=1}^{N_T(a^\dagger)} t^{\alpha+\epsilon-1}&\le \int_{0}^{N_T(a^\dagger)}t^{\alpha+\epsilon-1} dt=\frac{\left(N_T(a^\dagger)\right)^{\alpha+\epsilon}}{\alpha+\epsilon}.
\end{aligned}
\end{equation}
Therefore, we have
\begin{equation}\label{eq:bound_ind}
\begin{aligned}
\sum_{t=1}^T \ind{a_t\neq a^\dagger} t^{\alpha+\epsilon-1}&\ge \frac{1}{\alpha+\epsilon} \left(T^{\alpha+\epsilon}-\left(N_T(a^\dagger)\right)^{\alpha+\epsilon}\right) - {1 \over \alpha+\epsilon}\\
&=\frac{T^{\alpha+\epsilon}}{\alpha+\epsilon}\left(1-(1-\frac{T-N_T(a^\dagger)  }{T})^{\alpha+\epsilon}\right) - {1 \over \alpha+\epsilon}\\
&\ge \frac{T^{\alpha+\epsilon} }{\alpha+\epsilon}\frac{T-N_T(a^\dagger) }{T}(\alpha+\epsilon)- {1 \over \alpha+\epsilon}\\
&=T^{\alpha+\epsilon}-T^{\alpha+\epsilon-1} N_T(a^\dagger)- {1 \over \alpha+\epsilon}.
\end{aligned}
\end{equation}
The inequality follows from the fact $(1-x)^c \le 1-cx$ for $x,c \in (0,1)$.
Plug back in~\eqref{eq:ERiT} we have
\begin{equation}
\begin{aligned}
R_T&\ge \E{}{T^{\alpha+\epsilon}-T^{\alpha+\epsilon-1} N_T(a^\dagger)- {1 \over \alpha+\epsilon}}\\
&=T^{\alpha+\epsilon}-T^{\alpha+\epsilon-1} \E{}{N_T(a^\dagger)}- {1 \over \alpha+\epsilon}.
\end{aligned}
\end{equation}
Then note that $R_T\le MT^\alpha$, thus we have
\begin{equation}\label{eq:aux}
\begin{aligned}
&\E{}{N_T(a^\dagger)}\ge T-\frac{T^{1-\alpha-\epsilon}}{\alpha+\epsilon}-MT^{1-\epsilon}.
\end{aligned}
\end{equation}
We now analyze the attack cost. Note that when $a_t\neq a^\dagger$, the per-round attack cost is $|\tilde \Ell_t(a_t)-\Ell_t(a_t)|\le 1$. On the other hand, when $a_t=a^\dagger$, the per-round attack cost is
\begin{equation}
|\tilde \Ell_t(a^\dagger)-\Ell_t(a^\dagger)|\le t^{\alpha+\epsilon-1}
\end{equation}
Therefore, the expected attack cost is
\begin{equation}
\begin{aligned}
\E{}{C_T}&=\E{}{\sum_{t=1}^T |\tilde \Ell_t(a_t)-\Ell_t(a_t)|}\\
& \le \E{}{\sum_{t=1}^T\ind{a_t\neq a^\dagger}}+\E{}{\sum_{t=1}^Tt^{\alpha+\epsilon-1}}\\
&\le T-\E{}{N_T(a^\dagger)}+\sum_{t=1}^T t^{\alpha+\epsilon-1}\\
&\le \frac{T^{1-\alpha-\epsilon}}{\alpha+\epsilon}+MT^{1-\epsilon}+\frac{1}{\alpha+\epsilon} T^{\alpha+\epsilon},
\end{aligned}
\end{equation}
where we have used~\eqref{eq:aux}.
\end{proof}

\thmHardRecoverEasy*
\begin{proof}
Let $t_0=\rho^{\frac{1}{\alpha+\epsilon-1}}$ and define $\T_0=\{t\mid t\ge t_0 \text{ and  } t\notin T_\rho\}$. Note that $\epsilon<1-\alpha$, thus $t^{\alpha+\epsilon-1}$ is a monotonically decreasing function of $t$ when $t\ge 1$, thus we have
\begin{equation}
t^{\alpha+\epsilon-1}\le \rho, \forall t\in \T_0.
\end{equation}
Therefore $\forall t\in \T_0$, $1-t^{\alpha+\epsilon-1}\ge 1-\rho$. Furthermore, note that $\forall t\in \T_0$, we must have $t\notin \T_\rho$, which means $\Ell_t(a^\dagger)\le 1-\rho$, thus the loss function prepared by the attacker~\eqref{eq:maximum_target_loss_Ell} satifies
\begin{equation}
\forall t\in \T_0, \tilde \Ell_t(a)= \left\{
\begin{array}{ll}
\Ell_t(a^\dagger)\le 1-\rho& \mbox{ if } a=a^\dagger, \\
1 & \mbox{ otherwise,}
\end{array}
\right.
\end{equation}
As a result, $\forall t\in \T_0$, whenever the bandit player selects a non-target arm $a_t\neq a^\dagger$, the player incurs at least regret $\rho$.
Next note that by our assumption $|\T_\rho|=\tau$, thus
\begin{equation}
|\T_0|\ge |\{t\mid t\ge t_0\}|-|\T_\rho|\ge T-t_0-\tau
\end{equation}
Therefore, the total regret after attack is
\begin{equation}
\begin{aligned}
R_T&= \E{}{\sum_{t=1}^T\left(\tilde\Ell_t(a_t)- \tilde\Ell_t(a^\dagger)\right)}\\
&\ge \E{}{\sum_{t\in \T_0} \ind{a_t\neq a^\dagger} \left(\tilde\Ell_t(a_t)-\tilde\Ell_t(a_t^\dagger)\right)}\\
&=\E{}{\sum_{t\in \T_0}^T \ind{a_t\neq a^\dagger} \left(1-\Ell_t(a_t^\dagger)\right)}\\
&\ge\E{}{\sum_{t\in \T_0}^T \ind{a_t\neq a^\dagger} \rho} (\Ell_t(a^\dagger)\le 1-\rho)\\
&\ge \rho \E{}{\sum_{t\in \T_0}^T \ind{a_t\neq a^\dagger}}.
\end{aligned}
\end{equation}
Since $R_T\le MT^\alpha$ for some constant $M$, we have 
\begin{equation}\label{eq:tmp}
\E{}{\sum_{t\in \T_0}^T \ind{a_t\neq a^\dagger}}\le MT^\alpha/\rho. 
\end{equation}
Therefore,
\begin{equation}
\begin{aligned}
\E{}{\sum_{t=1}^T\ind{a_t\neq a^\dagger}}&\le T-|\T_0|+\E{}{\sum_{t\in \T_0}^T \ind{a_t\neq a^\dagger}}\\
&\le t_0+\tau+MT^\alpha/\rho\\
&=\rho^{\frac{1}{\alpha+\epsilon-1}}+\tau+MT^\alpha/\rho.
\end{aligned}
\end{equation}
Thus we have
\begin{equation}
\E{}{N_T(a^\dagger)}=T-\E{}{\sum_{t=1}^T\ind{a_t\neq a^\dagger}}\ge T-\rho^{\frac{1}{\alpha+\epsilon-1}}-\tau-MT^\alpha/\rho.
\end{equation}

We now upper bound the attack cost.
\begin{equation}
\begin{aligned}
\E{}{C_T}&=\E{}{\sum_{t=1}^T |\tilde \Ell(a_t)-\Ell(a_t)|}\\
&=\E{}{\sum_{t\notin \T_0}|\tilde \Ell(a_t)-\Ell(a_t)|}+\E{}{\sum_{t\in \T_0} |\tilde \Ell(a_t)-\Ell(a_t)|}\\
&\le T-|\T_0|+\E{}{\sum_{t\in \T_0: a_t\neq a^\dagger} |\tilde \Ell(a_t)-\Ell(a_t)|} \left(\tilde \Ell_t(a^\dagger)=\Ell_t(a^\dagger)\right)\\
&= T-|\T_0|+\E{}{\sum_{t\in \T_0: a_t\neq a^\dagger} \left(1-\Ell_t(a_t)\right)} \\
&\le T-|\T_0|+\E{}{\sum_{t\in \T_0: a_t\neq a^\dagger}1}\\
&= t_0+\tau+\E{}{\sum_{t\in \T_0}^T \ind{a_t\neq a^\dagger}}\\
&\le \rho^{\frac{1}{\alpha+\epsilon-1}}+\tau+MT^\alpha/\rho,
\end{aligned}
\end{equation}
where we have used~\eqref{eq:tmp} in the last inequality.
\end{proof}

We now derive a lower bound on the cumulative attack cost for the Exp3 victim algorithm. The Exp3 algorithm is described in~\algref{alg:Exp3}.
\begin{algorithm}[H]
  \begin{algorithmic}[1]
    \caption{The Exponential Weighted Exploration Exploitation (Exp3) Algorithm}
    \label{alg:Exp3}
    \STATE \textbf{Parameters}: $w_1=(1,...,1)$, total horizon $T$, and a constant learning rate $\eta$.
    \FOR{$t=1,2,\ldots, T$}    
    \STATE Define $\pi_t=\frac{w_t}{||w_t||_1}$
    \STATE Draw $a_t\sim \pi_t$, and observe loss $\ell_t=\Ell_t(a_t)$
    \FOR{$a=1,...,K$}
     \IF{$a\neq a_t$}
    \STATE $w_{t+1, a}=w_{t,a}$
    \ELSE
    \STATE $w_{t+1,a}=w_{t,a}\exp(-\eta\frac{\ell_t}{\pi_{t, a}})$
    \ENDIF
   \ENDFOR
     \ENDFOR
  \end{algorithmic}
\end{algorithm}

\lemLB*
\begin{proof}
The Exp3 algorithm maintains a weight $w_t(a)$ for each arm $a\in \A$, which is often initialized as $w_1(a)=1, \forall a$. The probability of selecting any arm $a$ in round $t$ is computed as
\begin{equation}
 \pi_t(a)=\frac{ w_t(a)}{\sum_{a^\prime}  w_t(a^\prime)}.
\end{equation}
The player selects an arm $a_t$ by sampling according to $\pi_t$. After observing the loss $ \ell_t= \Ell_t( a_t)$, the Exp3 updates the weights as below.
\begin{equation}\label{eq:eta}
 w_{t+1}(a) =  w_t(a)\exp(-\eta \hat \ell_t(a)), \forall a,
\end{equation}
where $\eta$ is some constant to be selected later.
\begin{equation}
\hat\ell_t(a) = \left\{
\begin{array}{ll}
\frac{\ell_t}{\pi_t(a_t)}=\frac{ \Ell_t(a_t)}{ \pi_t(a_t)} & \mbox{ if $a=a_t$} \\
0 & \mbox{ otherwise} \\
\end{array}
\right.
\end{equation}
 Note that $\forall a\in \A$, we have
\begin{equation}
\begin{aligned}
 \pi_{t+1}(a)&=\frac{ w_t(a)\exp(-\eta \hat \ell_t(a))}{\sum_{a^\prime}  w_t(a^\prime)\exp(-\eta \hat \ell_t(a^\prime))}\\
&\ge \frac{ w_t(a)\exp(-\eta \hat \ell_t(a))}{\sum_{a^\prime}  w_t(a^\prime)}\\
&= \pi_t(a)\exp(-\eta \hat \ell_t(a)).
\end{aligned}
\end{equation}
Now using the inequality $e^{-x}\ge 1-x$, we have
\begin{equation}\label{p_t_bound}
 \pi_{t+1}(a)\ge  \pi_t(a)(1-\eta \hat \ell_t(a)).
\end{equation}
Taking expectation on both sides of~\eqref{p_t_bound}, we have
\begin{equation}
\begin{aligned}
\E{}{ \pi_{t+1}(a)}&\ge \E{}{ \pi_t(a)(1-\eta \hat \ell_t(a)}\\
&=\E{}{ \pi_t(a)}-\eta\E{}{ \pi_t(a)\hat \ell_t(a)}\\
&=\E{}{ \pi_t(a)}-\eta\E{}{\E{}{ \pi_t(a)  \pi_t(a)\frac{ \Ell_t(a)}{ \pi_t(a)}\mid  \pi_t}}\\
&=\E{}{ \pi_t(a)}-\eta\E{}{\E{}{ \pi_t(a) \Ell_t(a)\mid  \pi_t}}\\
&=\E{}{ \pi_t(a)}-\eta\E{}{ \pi_t(a)\Ell_t(a)}
\end{aligned}
\end{equation}
Now by telescoping, we have $\forall a\in \A$
\begin{equation}
\begin{aligned}
\E{}{ \pi_{t}(a)}&\ge \E{}{ \pi_{t-1}(a)}-\eta\E{}{ \pi_{t-1}(a) \Ell_{t-1}(a)}\\
&\ge \E{}{ \pi_{t-2}(a)}-\eta\E{}{ \pi_{t-1}(a) \Ell_{t-1}(a)}-\eta\E{}{ \pi_{t-2}(a) \Ell_{t-2}(a)}\\
&\ge ...\ge   \pi_1(a)-\eta\sum_{h=1}^{t-1}\E{}{ \pi_{h}(a) \Ell_{h}(a)}
\end{aligned}
\end{equation}
For all $a$, the total number of rounds where $a$ is selected is $N_{T}(a)=\sum_{t=1}^{T} \ind{a_t=a}$. We have
\begin{equation}
\begin{aligned}
\E{}{ N_T(a)}&=\sum_{t=1}^{T} \E{}{ \pi_t(a)}\\
&\ge  T \pi_1(a)-\eta \sum_{t=1}^{T}\sum_{h=1}^{t-1}\E{}{ \pi_h(a) \Ell_h(a)} \\
&\ge T \pi_1(a)-\eta  T \sum_{h=1}^{T} \E{}{ \pi_h(a) \Ell_h(a)}.
\end{aligned}
\end{equation}
\end{proof}

\thmLB*

\begin{proof}
Since we want to derive a lower bound on the expected attack cost for \textit{victim-agnostic} attackers, it suffices to choose a special bandit task, and a victim bandit algorithm that guarantees $O(T^\alpha)$ regret, such that any victim-agnostic attacker must induce at least $\Omega(T)$ expected attack cost on the chosen task and the victim algorithm. The proof consists of three steps as below.

(1). We first construct the following special bandit task. The player has two arms s $a_1$, $a_2$. The loss functions are the following.
\begin{equation}
\Ell_t(a) = \left\{
\begin{array}{ll}
0 & \mbox{ if $a=a_1$} \\
0.5 & \mbox{ if $a=a_2$} \\
\end{array}
\right.
\end{equation}
The attacker target arm is $a^\dagger=a_2$. Note that this is an easy attack scenario since $\Ell_t(a^\dagger)=0.5<1, \forall t$. Let the loss functions manipulated  by the attacker be $\tilde \Ell_t$. Suppose the attack is successful, i.e., $\E{}{\tilde N_T(a_2)}=T-o(T)$, where $\tilde N_T$ is the number of arm selections under the manipulated loss $\tilde \Ell_t$. Then it must be the case that $\E{}{\tilde N_T(a_1)}=o(T)$. 

Using the lower bound~\eqref{eq:N_T_lower} in ~\lemref{N_T_lower}, we have that for arm $a_1$,
\begin{equation}
\E{}{\tilde N_T(a_1)}\ge T\tilde \pi_1(a_1)-\eta  T \sum_{t=1}^{T} \E{}{ \tilde \pi_t(a_1)\tilde \Ell_t(a_1)},
\end{equation}
where $\tilde \pi_t$ is the arm selection probability under attack.
Therefore, we must have
\begin{equation}
T\tilde \pi_1(a_1)-\eta  T \sum_{t=1}^{T} \E{}{ \tilde \pi_t(a_1)\tilde \Ell_t(a_1)}=o(T)
\end{equation}
which results in
\begin{equation}\label{lower_bound}
\begin{aligned}
\sum_{t=1}^T\E{}{\tilde \pi_t(a_1)\tilde \Ell_t(a_1)}&=\frac{T\tilde \pi_1(a_1)-o(T)}{\eta T}\\
&=\frac{\tilde \pi_1(a)}{\eta}-\frac{o(T)}{\eta T}.
\end{aligned}
\end{equation}
Note that in the RHS of~\eqref{lower_bound}, as $T\rightarrow\infty$, we have
\begin{equation} 
\left(\frac{o(T)}{\eta T}\right)/\left(\frac{\tilde \pi_1(a)}{\eta}\right)=\frac{o(T)}{T \tilde \pi_1(a)}\rightarrow 0.
\end{equation}
Therefore, as $T\rightarrow\infty$, we have
\begin{equation}\label{eq:any_non_target}
\begin{aligned}
\sum_{t=1}^T\E{}{\tilde \pi_t(a_1)\tilde \Ell_t(a_1)}&\rightarrow \frac{\tilde \pi_1(a)}{\eta}.
\end{aligned}
\end{equation}

(2). We now choose a particular victim bandit algorithm that guarantees $O(T^\alpha)$ regret rate. Specifically, we choose the Exp3 algorithm that uses learning rate $\eta=\beta T^{-\alpha}$ for some constant $\beta>0$ and $\alpha\ge \frac{1}{2}$. In the standard analysis of Exp3 algorithm, the regret bound is $R_T\le \frac{1}{\eta}\log K+\frac{\eta}{2}TK$. Plug in $\eta=\beta T^{-\alpha}$, the regret of the chosen victim Exp3 algorithm is
\begin{equation}
R_T\le \frac{1}{\beta} T^\alpha\log K+\frac{\beta}{2}T^{1-\alpha}K= O(T^\alpha)+O(T^{1-\alpha})=O(T^\alpha),
\end{equation}
where the last equality is due to $\alpha\ge\frac{1}{2}$ and thus $O(T^{1-\alpha})$ is negligible compared to $O(T^\alpha)$.
Therefore, the victim bandit algorithm guarantees regret rate $O(T^\alpha)$.

(3). Finally, we prove a lower bound on the attack cost if some victim-agnostic attacker performs attack on the bandit task and the victim algorithm chosen above. Note that since $\Ell_t(a_1)=0$ and $\tilde \Ell_t(a_1)\ge 0$, thus we always have $\tilde \Ell_t(a_1)=|\tilde \Ell_t(a_1)-\Ell_t(a_1)|$. Therefore, the expected attack cost is
\begin{equation}
\begin{aligned} 
&\E{}{\sum_{t=1}^T \sum_{a}\tilde \pi_t(a)|\tilde \Ell_t(a)-\Ell_t(a)|}\\
&\ge \E{}{\sum_{t=1}^T \tilde \pi_t(a_1)|\tilde \Ell_t(a_1)-\Ell_t(a_1)|}\\
&=\sum_{t=1}^T\E{}{\tilde \pi_t(a_1)\tilde \Ell_t(a_1)}\rightarrow\frac{\tilde \pi_1(a)}{\eta}\\
&=\frac{\tilde \pi_1(a)}{\beta}T^\alpha=\Omega(T^\alpha),
\end{aligned}
\end{equation}
where we have used~\eqref{eq:any_non_target}.
\end{proof}

\end{document}